%% LyX 2.2.2 created this file.  For more info, see http://www.lyx.org/.
%% Do not edit unless you really know what you are doing.
\documentclass[english,preprint]{elsarticle}
\usepackage[T1]{fontenc}
\usepackage[latin9]{inputenc}
\usepackage{amsmath}
\usepackage{amsthm}
\usepackage{amssymb}
\usepackage{stmaryrd}
\usepackage{graphicx}

\makeatletter

%%%%%%%%%%%%%%%%%%%%%%%%%%%%%% LyX specific LaTeX commands.
%% Because html converters don't know tabularnewline
\providecommand{\tabularnewline}{\\}

%%%%%%%%%%%%%%%%%%%%%%%%%%%%%% Textclass specific LaTeX commands.
\numberwithin{equation}{section}
\numberwithin{figure}{section}
\newenvironment{lyxlist}[1]
{\begin{list}{}
{\settowidth{\labelwidth}{#1}
 \setlength{\leftmargin}{\labelwidth}
 \addtolength{\leftmargin}{\labelsep}
 }}
{\end{list}}
\theoremstyle{plain}
\newtheorem{thm}{\protect\theoremname}
  \theoremstyle{definition}
  \newtheorem{defn}[thm]{\protect\definitionname}
 \theoremstyle{plain}
 \newtheorem*{property*}{\protect\propertyname}
 \theoremstyle{plain}
 \newtheorem{requirement}{\protect\requirementname}
  \theoremstyle{plain}
  \newtheorem{lem}[thm]{\protect\lemmaname}
  \theoremstyle{plain}
  \newtheorem{cor}[thm]{\protect\corollaryname}
 \theoremstyle{plain}
 \newtheorem{property}[thm]{\protect\propertyname}

%%%%%%%%%%%%%%%%%%%%%%%%%%%%%% User specified LaTeX commands.
\date{}

\makeatother

\usepackage{babel}
  \providecommand{\corollaryname}{Corollary}
  \providecommand{\definitionname}{Definition}
  \providecommand{\lemmaname}{Lemma}
 \providecommand{\propertyname}{Property}
 \providecommand{\requirementname}{R\!\!}
\providecommand{\theoremname}{Theorem}

\begin{document}

\begin{frontmatter}{}

\title{From Propositional Logic to Plausible Reasoning: A Uniqueness Theorem}

\tnotetext[t1]{Submitted and currently under review at \emph{International Journal
of Approximate Reasoning}.}

\tnotetext[t2]{\copyright 2017. This manuscript version is made available under
the CC-BY-NC-ND 4.0 license \texttt{http://creativecommons.org/licenses/by-nc-nd/4.0/}.}

\author{Kevin S. Van Horn}

\address{Adobe Systems, 3900 Adobe Way, Lehi, UT 84043, United States}

\ead{vanhorn@adobe.com}
\begin{abstract}
We consider the question of extending propositional logic to a logic
of plausible reasoning, and posit four requirements that any such
extension should satisfy. Each is a requirement that some property
of classical propositional logic be preserved in the extended logic;
as such, the requirements are simpler and less problematic than those
used in Cox's Theorem and its variants. As with Cox's Theorem, our
requirements imply that the extended logic must be isomorphic to (finite-set)
probability theory. We also obtain specific numerical values for the
probabilities, recovering the classical definition of probability
as a \emph{theorem}, with truth assignments that satisfy the premise
playing the role of the ``possible cases.'' 
\end{abstract}
\begin{keyword}
Bayesian \sep Carnap \sep Cox \sep Jaynes \sep logic \sep probability 
\end{keyword}

\end{frontmatter}{}

\section{Introduction}

E.\ T.\ Jaynes \cite[p. xxii]{ptlos} proposes the view that probability
theory is the \emph{uniquely determined} extension of classical propositional
logic (CPL) to a ``logic of plausible reasoning'' :
\begin{quote}
Our theme is simply: \emph{probability theory as extended logic}.
\ldots{}the mathematical rules of probability theory are not merely
rules for calculating frequencies of `random variables'; they are
also the unique consistent rules for conducting inference (i.e.\ plausible
reasoning) of any kind\ldots{}
\end{quote}
This view is grounded in the work of Pólya \cite{polya} and Cox \cite{cox},
especially the latter. In this paper we aim to set the notion of probability
theory as the necessary extension of CPL on solid footing.

Our goal is to generalize the logical consequence relation, which
deals only in certitudes, to handle \emph{degrees} of certainty. Whereas
$X\models A$ means that $A$ (the conclusion) is a logical consequence
of $X$ (the premise), we write $A\mid X$ for ``the reasonable credibility
of the proposition $A$ (the query) when the proposition $X$ (the
premise) is known to be true'' (paraphrasing Cox \cite{cox}.) We
call $\left(\cdot\mid\cdot\right)$ the \emph{plausibility function}.\emph{
}If $X\models A$ then $A\mid X$ is some value indicating ``certainly
true,'' if $X\models\neg A$ then $A\mid X$ is some value indicating
``certainly false,'' and otherwise $A\mid X$ is a value indicating
some intermediate level of plausibility. Our task is to determine
what the plausibility function must be, based on logical criteria.

As a generalization of the logical consequence relation, the plausibility
function must depend \emph{only} on its two explicit arguments; the
value it returns must not depend on any additional information that
varies according to the problem domain to which it is applied, nor
according to the intended meanings of the propositional symbols. This
is a \emph{formal logical} theory we are developing, and so any intended
semantics of the propositional symbols must be expressed axiomatically
in the premise.

One might question whether all relevant information for determining
the plausibility of some proposition $A$ can be expressed in propositional
form for inclusion in the premise $X$. Might not our background information
include ``soft'' relationships, mere \emph{propensities} for propositions
to be associated in some way? Although we provide some suggestive
examples, we do not attempt to resolve that question. Instead we ask,
\emph{given} that the background information and intended semantics
are expressed in propositional form and included in the premise, with
no other information available, what can we conclude about the plausibility
function?

We posit four Requirements for the plausibility function. Each of
these requires that some property of the logical consequence relation
be retained in the generalization to a plausibility function. Three
are invariance properties, and the fourth is a requirement to preserve
distinctions in degree of plausibility that already exist within CPL.
These Requirements (discussed in detail later) are the following:
\begin{lyxlist}{R5.}
\item [{R\ref{req:equivalence}.}] If $X$ and $Y$ are logically equivalent,
and $A$ and $B$ are logically equivalent assuming $X$, then $A\mid X=B\mid Y$
(Section \ref{sec:Invariance-from-logical-equiv}.)
\item [{R\ref{req:definitions}.}] We may define a new propositional symbol
without affecting the plausibility of any proposition that does not
mention that symbol. Specifically, if $s$ is a propositional symbol
not appearing in $A$, $X$, or $E$, then $A\mid X=A\mid\nobreak\left(s\leftrightarrow E\right)\wedge X$
(Section \ref{sec:Invariance-under-definition}.)
\item [{R\ref{req:irrelevance}.}] Adding irrelevant information to the
premise does not affect the plausibility of the query. Specifically,
if $Y$ is a satisfiable propositional formula that uses no propositional
symbol occurring in $A$ or $X$, then $A\mid X=A\mid Y\wedge X$
(Section \ref{sec:Invariance-under-addition-of-irrelevant}.)
\item [{R\ref{req:ordering}.}] The implication ordering is preserved:
if $X\models A\rightarrow B$ but not $X\models B\rightarrow A$ then
$A\mid X$ is a plausibility value that is strictly less than $B\mid X$
(Section \ref{sec:Ordering-Properties}.)
\end{lyxlist}
Note that we do \emph{not} assume that plausibility values are real
numbers, nor that they are totally ordered; R\ref{req:ordering} presumes
only that there is some \emph{partial} order on plausibility values.

Given R1\textendash R\ref{req:last}, we prove that plausibilities
are essentially probabilities in disguise. Specifically, we show that
\begin{enumerate}
\item there is an order-preserving isomorphism $P$ between the set of plausibility
values $\mathbb{P}$ and the set of rational probabilities $\mathbb{Q}\cap[0,1]$;
\item $P\left(A\mid X\right)$, the plausibility $A\mid X$ mapped via $P$
to the unit interval, is \emph{necessarily} the ratio of the number
of truth assignments that satisfy both $A$ and $X$ to the number
of truth assignments that satisfy $X$; and
\item hence the usual laws of probability follow as a consequence.
\end{enumerate}
This identifies finite-set probability theory as the uniquely determined
extension of CPL to a logic of plausible reasoning.

The body of this paper is organized as follows:
\begin{itemize}
\item In Section \ref{sec:Relation-to-Prior-Work} we compare this work
to Cox's Theorem and variants, as well as Carnap's system of logical
probability.
\item In Section \ref{sec:Logical-Preliminaries} we review some notions
from CPL, discuss the partial plausibility ordering that \emph{already
exists} within CPL, and discuss the nature of the plausibility function.
\item Our main result is proven in Sections \ref{sec:Invariance-from-logical-equiv},
\ref{sec:Invariance-under-definition}, \ref{sec:Invariance-under-addition-of-irrelevant},
and \ref{sec:Ordering-Properties}, which also introduce the Requirements,
discuss the motivation behind them, and explore some of their consequences.
Along the way we discuss how Carnap's system violates R\ref{req:irrelevance}.
\item In Section \ref{sec:Consistency-of-Requirements} we prove that R1\textendash R\ref{req:last}
are consistent.
\item Section \ref{sec:Discussion} discusses three topics: the connection
of our results to the classical definition of probability, the issue
of non-uniform probabilities, and an initial attempt at extending
our results to infinite domains.
\end{itemize}

\section{Relation to Prior Work\label{sec:Relation-to-Prior-Work}}

To set the context for this paper, clarify our goals, and head off
possible misconceptions, we now review similar prior work and point
out the differences.

\subsection{Cox's Theorem}

R.\ T.\ Cox \cite{cox} proposes a handful of intuitively-appealing,
qualitative requirements for any system of plausible reasoning, and
shows that these requirements imply that any such system is just probability
theory in disguise. Specifically, he shows that there is an order
isomorphism between plausibilities and the unit interval $[0,1]$
such that $A\mid X$, after mapping from plausibilities to $[0,1]$,
respects the laws of probability. 

Over the years Cox's arguments have been refined by others \cite{aczel,ptlos,paris,tribus,vanhorn},
making explicit some requirements that were only implicit in Cox's
original presentation, and replacing some of the requirements with
slightly less demanding assumptions than those used in Cox's original
proof. One version of the requirements \cite{vanhorn} may be summarized
as follows:
\begin{lyxlist}{C5.}
\item [{C1.}] $A\mid X$ is a real number.
\item [{C2.}] $A\mid X=A'\mid X$ whenever $A$ is logically equivalent
to $A'$, and $B\mid X\leq A\mid X$ for any tautology $A$.
\item [{C3.}] There exists a nonincreasing function $S$ such that $\neg A\mid X=S\left(A\mid X\right)$
for all $A$ and satisfiable $X$.
\item [{C4.}] The set of plausibility triples $\left(y_{1},y_{2},y_{3}\right)$
where $y_{1}=A_{1}\mid X$, $y_{2}=A_{2}\mid\nobreak A_{1}\wedge\nobreak X$,
and $y_{3}=A_{3}\mid A_{2}\wedge A_{1}\wedge X$ for some $A_{1}$,
$A_{2}$, $A_{3}$, and $X$, is dense in $[0,1]^{3}$.
\item [{C5.}] There exists a continuous function $F\colon[\mathsf{f},\mathsf{t}]^{2}\rightarrow[\mathsf{f},\mathsf{t}]$,
strictly increasing in both arguments on $(\mathsf{f},\mathsf{t}]^{2}$,
such that $A\wedge B\mid X=F\left(A\mid B\wedge X,\,B\mid X\right)$
for any $A$, $B$, and satisfiable $X$. Here we use $\mathsf{t}\triangleq A\mid X$
for any tautology $A$, and $\mathsf{f}\triangleq S\left(\mathsf{t}\right)$.
\end{lyxlist}
These requirements have not been without controversy. For example,
Shafer \cite{shafer-response} objects to C1, C3, and C5; Halpern
\cite{halpern-99a,halpern-99b} questions C4; and Colyvan \cite{colyvan}
objects to C2 on the basis that it presumes the law of the excluded
middle.

Our approach has no equivalent of C1, C3, C4, or C5:
\begin{itemize}
\item We are agnostic as to the set of allowed plausibility values. We do
not even require that plausibility values be totally ordered. 
\item We have no requirement on how the plausibility $\neg A$ decomposes.
\item We have no density requirement on plausibility values.
\item We have no requirement on how the plausibility of $A\wedge B$ decomposes,
much less any continuity or strictness requirements for such decompositions.
\end{itemize}
We do retain a variant of C2. Our goal is to extend the \emph{classical}
propositional logic; we make no attempt to address intuitionistic
logic.

Our requirements are all based on preserving in the extended logic
some existing property of CPL. Three of these are invariances\textemdash ways
in which $A$ or $X$ may be modified without altering $A\mid X$\textemdash and
the last is a requirement to preserve those distinctions in degree
of plausibility already present in CPL. We believe that such an approach
leaves far less room for objections to the requirements.

The results we obtain are similar to those of Cox's Theorem, with
these differences:
\begin{itemize}
\item We obtain an order isomorphism $P$ between plausibility values and
\emph{rational} probability values.
\item We obtain specific numerical values for $P\left(A\mid X\right)$,
and not just the laws for decomposing $P\left(A\mid X\right)$.
\item The conditioning information $X$ is necessarily a propositional formula.
(Some variants \cite{clayton-waddington,ptlos,vanhorn} of Cox's Theorem
allow $X$ to be an undefined ``state of information'' to which
we may add additional propositional information.)
\item Our results apply only to finite problem domains or finite approximations
of infinite domains. (In Section \ref{subsec:Infinite-Domains} we
discuss how to extend our results to infinite domains.)
\end{itemize}

\subsection{Clayton and Waddington: bridging the intution gap}

In a recent paper, Clayton and Waddington \cite{clayton-waddington}
seek to ``bridge the intuition gap'' in Cox's Theorem by proposing
alternative requirements they argue are more intuitively reasonable,
and then \emph{proving} C4 and the strictness of $F$ in C5 as theorems.
 We find their most interesting and important contributions to be
the following:
\begin{itemize}
\item Rather than just tweaking Cox's Theorem, they have instead created
an entirely new proof of its result. The meat of the proof of Cox's
Theorem lies in deriving functional equations that $F$ and $S$ must
satisfy, then solving those equations. But by the time they have ``bridged
the intuition gap'' in the requirements, they already have most of
the proof completed, without any need to solve functional equations.
\item Jaynes \cite{ptlos} argues that, if one's background information
is ``indifferent'' between two propositions, then they should be
assigned equal plausibility. They formalize this idea as an invariance
principle: if $A'\mid X'$ is obtained from $A\mid X$ by consistently
renaming all propositional symbols used via a one-to-one mapping,
then the two plausibilities are equal.
\item Their proof yields the classical \emph{definition} of probability\textemdash the
ratio of the number of positive cases to the number of all possible
cases\textemdash as a \emph{theorem} in certain cases they call ``$N$-urns.''
\end{itemize}
However, the list of assumptions they use is fairly long: 11 altogether.
We obtain similar results with fewer and simpler requirements, given
in Sections \ref{sec:Invariance-from-logical-equiv}\textendash \ref{sec:Ordering-Properties}.
Here is a comparison:
\begin{itemize}
\item Clayton and Waddington use a variant of C2 that replaces ``$A$ is
logically equivalent to $A'$'' with ``$A\wedge X$ is logically
equivalent to $A'\wedge X$''; this and their Assumption 2.1 are
comparable to our R\ref{req:equivalence} (replacement of premise
or query with a logically equivalent formula).
\item Our R\ref{req:ordering} (preservation of the implication ordering)
is a stronger (more general) version of their Assumption 2.7. We do
not actually need this more general form\textemdash Lemma \ref{lem:upsilon-is-strictly-increasing}
only uses a restricted form that corresponds to their Assumption 2.7\textemdash but
we feel that the rationale for the requirement is more clearly seen
in this more general form.
\item Our R\ref{req:definitions} (invariance under definition of new propositional
symbols) and R\ref{req:irrelevance} (invariance under addition of
irrelevant information) have no direct equivalent among their assumptions,
but are inspired by their discussion of the Principle of Indifference
and their Assumption 2.3 (translation invariance).
\item We use no equivalent of C1 nor their Assumptions 1.2, 1.3, 1.4, 3.2,
3.3, and 3.5.
\end{itemize}
From the above comparison one can see that it is the replacement of
Assumption 2.3 with R\ref{req:definitions} and R\ref{req:irrelevance}
that allows a drastic pruning of the assumptions used in obtaining
the main results of their paper.

Our approach was inspired by Clayton and Waddington's notion of an
``$N$-urn'' and the results they prove for $N$-urns. Our most
important innovation is Lemma \ref{lem:sample-space} showing how
to reduce \emph{every} allowable query-premise pair to a certain kind
of $N$-urn.

\subsection{Carnap: logical probability\label{sec:Carnap-logical-probability}}

Carnap \cite{carnap} undertakes an extensive investigation of ``logical
probability.'' He mentions two notions of probability: $\mathrm{probability}_{1}$
is epistemic probability\footnote{Carnap later favored a decision-theoretic view of $\mathrm{probability}_{1}$
\cite[Preface to the Second Edition]{carnap}.} (``the degree of confirmation of a hypothesis $h$ with respect
to an evidence statement $e$''), and $\mathrm{probability}_{2}$
is relative frequency. His focus is on $\mathrm{probability}_{1}$
and the problem of induction. Our terminology and his correspond roughly
as follows:
\begin{itemize}
\item Instead of a ``plausibility function'' Carnap discusses a ``confirmation
function'' $c$.
\item The rough equivalent of $A\mid X$ in Carnap's system is $c(A,X)$,
with a crucial difference described below.
\item We call $A$ and $X$ the query and premise, respectively; he calls
them the hypothesis and evidence.
\end{itemize}
We take $A$ and $X$ to be propositional formulas, whereas Carnap
allows them to be sentences in a variant of first-order predicate
logic in which the only allowed terms are variables and constant symbols.
The domain of discourse is taken to be a countably infinite set of
individuals, and for each individual there is a corresponding constant
symbol. He calls this most general form of the language $\mathfrak{L}_{\infty}$. 

Carnap also considers restricted languages $\mathfrak{L}_{N}$, $N\geq1$,
in which only the first $N$ constant symbols are allowed and the
domain of discourse is only the first $N$ individuals. Most of the
focus is on the finite languages $\mathfrak{L}_{N}$, with confirmation
functions for $\mathfrak{L}_{\infty}$ defined via a limiting process
on the sequence of languages $\mathfrak{L}_{N}$, $N\geq1$. This
is similar in spirit, though not in detail, to our approach to infinite
domains. 

A difference between Carnap's approach and ours is that we posit a
single language and single set of propositional symbols $\mathcal{S}$
to be used for all problem domains, whereas in Carnap's system each
problem domain has its own language $\mathfrak{L}_{\infty}$ with
its own set of predicate symbols and associated arities and interpretations.

Carnap's finite languages $\mathfrak{L}_{N}$ are equivalent to propositional
languages with a finite number of propositional symbols. Let an atomic
sentence be a formula of the form $p\left(s_{1},\ldots,s_{k}\right)$
for some $k$-ary predicate symbol $p$ and constant symbols $s_{1},\ldots,s_{k}$.
There are a finite number of distinct atomic sentences in $\mathfrak{L}_{N}$.
Define the propositional language $\mathfrak{L}'_{N}$ to have as
propositional symbols the atomic sentences of $\mathfrak{L}_{N}$.
We can transform any sentence $A\in\mathfrak{L}_{N}$ into an equivalent
\emph{propositional} formula $A'\in\mathfrak{L}'_{N}$:
\begin{enumerate}
\item Remove all quantifiers by repeatedly replacing any occurrence of a
subformula $\forall x\,\varphi(x)$ with the semantically equivalent
finite conjunction $\varphi\left(s_{1}\right)\wedge\cdots\wedge\varphi\left(s_{N}\right)$,
where $s_{1},\ldots,s_{N}$ are the constant symbols for the first
$N$ individuals.
\item If equality is allowed, replace any subformula $s_{i}=s_{i}$ with
a logically valid sentence, and any subformula $s_{i}=s_{j}$, $i\neq j$,
with an unsatisfiable sentence. In both cases choose replacement sentences
that contain no quantifiers nor use of equality. (Carnap specifies
that distinct constant symbols are assumed to reference distinct individuals.)
\end{enumerate}
We end up with propositional formulas in which the propositional symbols
have internal structure, but this internal structure is of no consequence
from the standpoint of deductive logic. To have logical effect, any
intended meaning for this internal structure must be given by additional
formulas in $\mathfrak{L}'_{N}$, axioms that are added to the set
of premises used. For example, suppose that we are deducing the logical
consequences of a set of premises $\Gamma$, that we have a two-place
predicate $\mathrm{lt}$, and that the intended meaning of $\mathrm{lt}(x,y)$
is that individual $x$ precedes individuals $y$ in some total ordering.
Then we must add to $\Gamma$ a set of axioms such as the following:
\begin{gather*}
\forall x\,\neg\mathrm{lt}(x,x)\\
\forall x\,\forall y\,\left(\mathrm{lt}(x,y)\vee\mathrm{lt}(y,x)\vee x=y\right)\\
\forall x\,\forall y\,\forall z\,\left(\mathrm{lt}(x,y)\wedge\mathrm{lt}(y,z)\rightarrow\mathrm{lt}(x,z)\right)
\end{gather*}
More precisely, we must add to $\Gamma$ the result of transforming
the above sentences of $\mathfrak{L}_{N}$ into equivalent \emph{propositional}
formulas of $\mathfrak{L}'_{N}$.

Carnap, however, is unwilling to axiomatize the intended interpretations
of the predicate symbols in this way. He writes \cite[p. 55]{carnap},
\begin{quote}
Since we intend to construct inductive logic as a theory of degree
of confirmation, based upon the meanings of the sentences involved\textemdash \emph{in
contradistinction to a mere calculus}\textemdash we shall construct
the language systems $\mathfrak{L}$ with an interpretation, hence
as a system of semantical rules, not as uninterpreted syntactical
systems.
\end{quote}
(Emphasis added.) This is an important difference between Carnap's
system and ours, one that we discuss further in Section \ref{sec:The-Plausibility-Function}
and Section \ref{subsec:Uniform-Versus-Non-uniform}. Thus the equivalent
of Carnap's $c(h,e)$ in our scheme is not $h\mid e$, but instead
$h\mid e\wedge X$, where $X$ is a conjunction of
\begin{itemize}
\item propositional axioms expressing the logical structure of the problem
domain, and
\item propositional formulas expressing any other background information. 
\end{itemize}
Unlike Cox \cite{cox}, Carnap makes no attempt to derive the laws
of probability from more fundamental considerations; instead, his
``conventions on adequacy'' \cite[p. 285]{carnap} require that
a valid confirmation function should conform to the laws of probability.
He ensures this by proceeding as follows:
\begin{enumerate}
\item A \emph{state description} for $\mathfrak{L}_{N}$ amounts to a truth
assignment on the set of atomic sentences of $\mathfrak{L}_{N}$.
\item A \emph{regular measure function} for $\mathfrak{L}_{N}$ amounts
to a strictly positive probability mass function over the set of state
descriptions for $\mathfrak{L}_{N}$.
\item A \emph{regular confirmation function} $c$ for $\mathfrak{L}_{N}$
is then a confirmation function defined as 
\[
c(h,e)=m(h\wedge e)/m(e)
\]
for some regular measure function $m$ on $\mathfrak{L}_{N}$.
\item These notions are extended to $\mathfrak{L}_{\infty}$ by imposing
a consistency condition on a sequence of regular measure functions
$m_{N}$ on $\mathfrak{L}_{N}$, $N\geq1$, considering the associated
regular confirmation functions $c_{N}$, and defining $c_{\infty}(h,e)=\lim_{N\rightarrow\infty}c_{N}(h,e)$.
\end{enumerate}
We see therefore that, although it is the conditional probabilities
$c(h,e)$ that most interest Carnap, unconditional probabilities are
for him more fundamental. In contrast, we take conditional plausibilities
as the fundamental concept and, rather than imposing the laws of probability,
seek to derive them.

Carnap's goal is that the confirmation function appropriate to a problem
domain should be uniquely determined by the semantics of the language
$\mathfrak{L}_{\infty}$, specifically, the intended interpretation
of the predicate symbols and constant symbols. In this he fails. Limiting
his attention to systems that contain monadic predicates (``properties'')
only, he proposes a specific confirmation function $c^{*}$, but says
\cite[p. 563]{carnap},
\begin{quote}
Now the chief arguments in favor of the function $c^{*}$\ldots{}
will consist in showing that this function is free of the inadequacies
in the other methods. It may then still be inadequate in other respects.
It will not be claimed that $c^{*}$ is a perfectly adequate explicatum
for $\mbox{probability}_{1}$, let alone that it is the only adequate
one\ldots{}
\end{quote}
He later \cite[Preface to Second Edition]{carnap}\cite{carnap-continuum}
proposes instead an entire family of confirmation functions $c_{\lambda}$
parameterized by a positive number $\lambda$.

In contrast, we show in Theorem \ref{thm:classical-prob-def} that
there is (up to isomorphism) a \emph{single}, \emph{unique} plausibility
function satisfying our criteria. Ironically, it corresponds to the
one confirmation function explicitly rejected by Carnap: a uniform
distribution over the set of truth assignments satisfying the premise
/ evidence. We discuss this further in Section \ref{subsec:Uniform-Versus-Non-uniform}.

\section{Logical Preliminaries\label{sec:Logical-Preliminaries}}

In this section we review some concepts from CPL, introduce some additional
logical concepts of our own, and discuss the nature of the plausibility
function as extending the logical consequence relation $\models$.

\subsection{Review of classical propositional logic}

A \emph{proposition} is a statement or assertion that must be true
or false; it is \emph{atomic} if it cannot be decomposed into simpler
assertions.

A \emph{propositional symbol} is one of a countably infinite set of
symbols $\mathcal{S}$ that are used to represent atomic propositions.
We abbreviate this to just ``symbol'' when the meaning is clear.

If $S\subseteq\mathcal{S}$ then a \emph{propositional formula on
$S$} is one of the following:
\begin{itemize}
\item a propositional symbol from $S$;
\item a formula $\neg A$, meaning ``not $A$'', for some propositional
formula $A$ on $S$; or
\item a formula $A\wedge B$, meaning ``$A$ and $B$,'' where $A$ and
$B$ are propositional formulas on $S$.
\end{itemize}
The other common logical operators\textemdash $A\vee B$ (or), $A\rightarrow B$
(implies), and $A\leftrightarrow B$ (if and only if)\textemdash are
defined in terms of $\neg$ and $\wedge$ in the usual way. We abbreviate
``propositional formula'' as just ``formula'' when the meaning
is clear.

We write $\Phi\left(S\right)$ for the set of all propositional formulas
on $S$, and $\Phi^{+}(S)$ for the satisfiable formulas.

We write $\sigma\left\llbracket A_{1},\ldots,A_{n}\right\rrbracket $
for the set of all propositional symbols occuring in any of the propositional
formulas $A_{1},\ldots,A_{n}$.

A \emph{truth assignment on $S$} is a function $\rho\colon S\rightarrow\{0,1\}$,
with 0 and 1 standing for falsity and truth, respectively. We recursively
extend it to all formulas on $S$ in the obvious way: 
\begin{eqnarray*}
\rho\left\llbracket A\right\rrbracket  & = & \rho(A)\mbox{ if }A\in S\\
\rho\left\llbracket \neg A\right\rrbracket  & = & 1-\rho\left\llbracket A\right\rrbracket \\
\rho\left\llbracket A\wedge B\right\rrbracket  & = & \rho\left\llbracket A\right\rrbracket \cdot\rho\left\llbracket B\right\rrbracket 
\end{eqnarray*}
where `$\cdot$' is just integer multiplication.

A truth assignment $\rho$ on $S$ \emph{satisfies} a formula $A$
on $S$ if $\rho\left\llbracket A\right\rrbracket =1$. A formula
$A$ is \emph{satisfiable} if there is $some$ truth assignment $\rho$
on $\sigma\left\llbracket A\right\rrbracket $ that satisfies $A$.
A formula $A$ is \emph{logically valid}, written $\models A$,\emph{
}if $every$ truth assignment on $\sigma\left\llbracket A\right\rrbracket $
satisfies $A$.

We say that $B$ is a \emph{logical consequence} of $A$, or \emph{$A$
logically implies $B$}, written $A\models B$, if every truth assignment
on $\sigma\left\llbracket A,B\right\rrbracket $ that satisfies $A$
also satisfies $B$. This captures the notion of a logically valid
argument: conclusion $B$ follows as a logical consequence of premises
$A_{1},\ldots,A_{n}$ if $A_{1}\wedge\cdots\wedge A_{n}\models B$.
Note that $A\models B$ if and only if $\models A\rightarrow B$.

The \emph{restriction} of a truth assignment $\rho$ on $S$ to some
$S'\subseteq S$ is the truth assignment $\rho'$ on $S'$ such that
$\rho'(s)=\rho(s)$ for every $s\in S'$.

Note that $\rho\left\llbracket A\right\rrbracket $ depends only on
the truth values assigned to those symbols that actually appear in
$A$; if $A$ is a formula on $S'\subseteq S$, $\rho$ is a truth
assignment on $S$, and $\rho'$ is the restriction of $\rho$ to
$S'$, then $\rho'\left\llbracket A\right\rrbracket =\rho\left\llbracket A\right\rrbracket $.
Because of this, we have some leeway in choosing the set of propositional
symbols to use in the definitions of ``logically valid,'' ``satisfiable,''
and ``logical consequence.'' If $\sigma\left\llbracket A\right\rrbracket \subseteq S_{A}$
and $\sigma\left\llbracket A,B\right\rrbracket \subseteq S_{AB}$,
then
\begin{itemize}
\item $A$ is logically valid iff every truth assignment on $S_{A}$ satisfies
$A$.
\item $A$ is satisfiable iff some truth assignment on $S_{A}$ satisfies
$A$.
\item $A\models B$ iff every truth assignment on $S_{AB}$ that satisfies
$A$ also satisfies $B$.
\end{itemize}
\global\long\def\pleqv#1{\equiv_{{\scriptscriptstyle #1}}\!}
Some notation.
\begin{itemize}
\item Two formulas are \emph{logically equivalent}, written $A\equiv B$,
if $\models A\leftrightarrow B$.
\item $A$ and $B$ are \emph{logically equivalent assuming $X$}, written
$A\pleqv XB$, if $X\models A\leftrightarrow B$.
\item We will use finite quantification as an abbreviation where convenient,
for example writing $\bigwedge_{i=1}^{n}A_{i}$ for $A_{1}\wedge\cdots\wedge A_{n}$.
\item If we write $A_{1}\wedge\cdots\wedge A_{n}$ and $n=0$, we understand
this to mean some logically valid formula such as $s\vee\neg s$.
\item If we write $A_{1}\vee\cdots\vee A_{n}$ and $n=0$, we understand
this to mean some unsatisfiable formula such as $s\wedge\neg s$.
\end{itemize}

\subsection{Finite sample spaces}

The development of probability theory usually begins with the idea
of a sample space, which has been downplayed so far\textemdash the
focus has been on propositions. We find in Theorem \ref{thm:classical-prob-def}
that $A\mid X$ can be characterized in terms of an induced sample
space: the set of truth assignments that satisfy $X$.\footnote{This is similar to Carnap's system, in which the set of state descriptions
serve as a sample space.} In particular, we find that $A\mid X$ is a function of the proportion
of points from this induced sample space that satisfy $A$. This motivates
the following:
\begin{defn}
$\#_{S}(X)$ is the number of truth assignments on $S$ satisfying
$X$, for any formula $X$ and finite $S$ such that $\sigma\left\llbracket X\right\rrbracket \subseteq S\subseteq\mathcal{S}$.
\end{defn}
The size of the induced sample space is $\#_{S}(X)$, and the proportion
of points from the induced sample space that satisfy $A$ is $\#_{S}\left(A\wedge X\right)/\#_{S}(X)$,
for $S\supseteq\sigma$$\left\llbracket A,X\right\rrbracket $.

Typically one thinks of a finite sample space as an arbitrary set
of $n>0$ distinct values $\Omega=\left\{ \omega_{1},\ldots,\omega_{n}\right\} $
representing different possible states of some system under consideration.
We can relate this to our notion of an induced sample space by choosing
a set of $n$ propositional symbols $S=\left\{ s_{1},\ldots,s_{n}\right\} $,
with the intended interpretation of $s_{i}$ being that the state
of the system is $\omega_{i}$. If our premise $X$ is a formula expressing
that exactly one of the $s_{i}$ is true, then there is a one-to-one
correspondence between our original sample space $\Omega$ and the
induced sample space of truth assignments, with $\omega_{i}$ corresponding
to the single truth assignment $\rho$ on $S$ satisfying $s_{i}\wedge X$.
This motivates the following:
\begin{defn}
\label{def:n-urn}Given any sequence of $n>0$ propositional symbols
$s_{1},\ldots,s_{n}$, 
\[
\left\langle s_{1},\ldots,s_{n}\right\rangle =\left(s_{1}\vee\cdots\vee s_{n}\right)\wedge\bigwedge_{1\leq i<j\leq n}\neg\left(s_{i}\wedge s_{j}\right);
\]
that is, $\left\langle s_{1},\ldots,s_{n}\right\rangle $ means that
exactly one of the $s_{i}$ is true. 
\end{defn}

\subsection{The implication ordering\label{sec:implication-ordering}}

At first blush it would seem that CPL tells us very little about the
relative plausibilities of different propositions, beyond determining
which are certainly true and which are certainly false given a premise
$X$. The reality is quite the opposite: CPL comes equipped with a
rich inherent plausibility ordering that we call the \emph{implication
ordering}.

\global\long\def\plle#1{\preceq_{{\scriptscriptstyle #1}}\!}
\global\long\def\pllt#1{\prec_{{\scriptscriptstyle #1}}\!}

\begin{defn}
Let $A,B,X\in\Phi\left(\mathcal{S}\right)$ with $X$ satisfiable.
We define 
\begin{eqnarray*}
\left(A\plle XB\right) & \Leftrightarrow & \left(X\models A\rightarrow B\right)\\
\left(A\pllt XB\right) & \Leftrightarrow & \left(A\plle XB\right)\mbox{ and not }\left(B\plle XA\right).
\end{eqnarray*}
\end{defn}
The following properties are easily verified:
\begin{itemize}
\item The relation $\plle X$ is a preorder: it is reflexive and transitive,
but not anti-symmetric.
\item $A\pllt XB$ is the same as ($A\plle XB$ and not $A\pleqv XB$).
\item $A\pleqv XB$ if and only $A\plle XB$ and $B\plle XA$.
\item If $A\pleqv XA'$ and $B\pleqv XB'$ and $A\plle XB$ then $A'\plle XB'$.
\end{itemize}
Hence $\plle X$ defines a partial order on the \emph{equivalence
classes} of propositional formulas under the relation $\pleqv X$.
This partial order is essentially just the subset ordering on truth
assignments:
\begin{property*}
Let $A_{1},A_{2},X\in\Phi\left(\mathcal{S}\right)$ with $X$ satisfiable.
Then $A_{1}\plle XA_{2}$ if and only if $\alpha\left\llbracket A_{1}\right\rrbracket \subseteq\alpha\left\llbracket A_{2}\right\rrbracket $,
where $\alpha\left\llbracket A\right\rrbracket $ is the set of truth
assignments on $\mathcal{S}$ that satisfy both $A$ and $X$.
\end{property*}
Assuming $X$, we therefore conclude the following:
\begin{itemize}
\item If $A\plle XB$ then $B$ is at least as plausible as $A$, since
$B$ is true for any possible world (truth assignment satisfying $X$)
for which $A$ is true.
\item If $A\equiv_{X}B$ then $A\plle XB$ and $B\plle XA$, hence $A$
and $B$ are equally plausible.
\item If $A\pllt XB$ then $B$ is strictly \emph{more} plausible than $A$,
since there are possible worlds for which $B$ is true and $A$ is
not, but not vice versa.
\end{itemize}
Consider an example that uses three propositional symbols $s_{1},s_{2},s_{3}$
with $X$ defined to be the formula stating that exactly one of these
three is true: $X=\left\langle s_{1},s_{2},s_{3}\right\rangle $.
Let $F$ be any unsatisfiable formula and $T$ be any logically valid
formula. Then 
\[
F\pllt Xs_{1}\pllt X\left(s_{1}\vee s_{2}\right)\pllt X\left(s_{1}\vee s_{2}\vee s_{3}\right)\pleqv XT.
\]
Note that adding additional information to the premise yields additional
formulas $A\rightarrow B$ as logical consequences, and hence may
collapse previously distinct plausibilities. Continuing the example,
if we add additional information to $X$ to obtain $Y=X\wedge\neg s_{2}$,
then 
\[
F\pllt Ys_{1}\pleqv Y\left(s_{1}\vee s_{2}\right)\pllt Y\left(s_{1}\vee s_{2}\vee s_{3}\right)\pleqv YT.
\]

\subsection{The plausibility function\label{sec:The-Plausibility-Function}}

We extend CPL to Jaynes's ``logic of plausible reasoning'' by introducing
a \emph{plausibility function} $\left(\cdot\mid\cdot\right)$ whose
domain is $\Phi\left(\mathcal{S}\right)\times\Phi^{+}\left(\mathcal{S}\right)$.
Think of $\left(\cdot\mid\cdot\right)$ as extending the logical consequence
relation: whereas $X\models A$ means that $A$ is known true (given
$X$), and $X\models\neg A$ means that $A$ is known false, it may
be that neither of these relations hold; $\left(\cdot\mid\cdot\right)$
fills in the gaps, so to speak, by assigning intermediate plausibilities
in such a case.

The logical consequence relation, as we have defined it, takes only
a single premise $X$ on the left-hand side, rather than a set of
premises $\mathcal{X}$. The compactness theorem for CPL says that
if $A$ is a logical consequence of a set of premises $\mathcal{X}$,
then it is a logical consequence of a finite subset of $\mathcal{X}$
\cite[p. 16]{mathematical-logic}; but any finite set of premises
$X_{1},\ldots,X_{n}$ can be combined into a single premise $X=X_{1}\wedge\cdots\wedge X_{n}$.
Likewise, the plausibility function $\left(\cdot\mid\cdot\right)$
takes only a single premise as its second argument.

We write $\mathbb{P}$ for the range of the plausibility function,
but leave it otherwise unspecified:
\begin{defn}
$\mathbb{P}$ is the set of achievable and meaningful plausibility
values; that is, 
\[
\mathbb{P}=\left\{ \left(A\mid X\right)\colon A,X\in\Phi\left(\mathcal{S}\right)\mbox{ and }X\mbox{ is satisfiable}\right\} .
\]
\end{defn}
There has been much unnecessary controversy over Cox's Theorem due
to differing implicit assumptions as to the nature of its plausibility
function. Halpern \cite{halpern-99a,halpern-99b} claims to demonstrate
a counterexample to Cox's Theorem by examining a finite problem domain,
but his argument presumes that there is a \emph{different} plausibility
function for every problem domain. Others \cite{cox,paris} seem to
presume a single plausibility function, but with domain-specific information
serving as an implicit extra argument\footnote{Strictly speaking, this also amounts to a different plausibility function
for every problem domain, but the practical difference is that certain
structural properties of the plausibility function, such as its range
and the choice of the functions $F$ and $S$, remain the same across
problem domains.}. A third interpretation \cite{clayton-waddington,ptlos,vanhorn}
presumes a single plausibility function with \emph{all} relevant information
about the problem domain encapsulated in the second argument, the
``state of information.''

We follow this third interpretation, with the premise\textemdash a
propositional formula\textemdash serving as the state of information:
\begin{itemize}
\item In CPL there is only a \emph{single} logical consequence relation
$\models$, defined on $\Phi\left(\mathcal{S}\right)$, rather than
entirely different logical consequence relations for each problem
domain. In our extended logic there is likewise only a \emph{single}
plausibility function, defined on $\Phi\left(\mathcal{S}\right)\times\Phi^{+}\left(\mathcal{S}\right)$,
and a single set of plausibility values $\mathbb{P}$ that are used
for \emph{all} problem domains.
\item In CPL any information about the problem domain that we wish to use
for deduction must be included in the premise(s) to the logical consequence
relation. Likewise in our extended logic, all relevant background
information about the problem domain must be included in the premise
to the plausibility function.
\end{itemize}
Think of the plausibility function as something one could implement
as a pure function in some programming language, taking as input two
strings matching the grammar for propositional formulas (or their
corresponding parse trees), and having access to \emph{no other} source
of information about the problem domain.

Note that we use the same set of propositional symbols $\mathcal{S}$
for all problem domains, rather than having a different set of propositional
symbols for each problem domain. The latter option would make the
set of allowed propositional symbols an implicit extra argument to
the plausibility function. The set of symbols $\mathcal{S}$ is countably
infinite to allow modeling arbitrarily complex problem domains.

As an example of incorporating background information into the premise,
suppose that we wish to discuss the outcome of rolling a six-sided
die, and our background knowledge is simply the list of distinct possible
outcomes. Let symbols $s_{i}$, $1\leq i\leq6$, have the intended
interpretation that the outcome is $i$. The formula $\left\langle s_{1},\ldots,s_{6}\right\rangle $
expresses our background knowledge, and so 
\[
s_{2}\mid\left\langle s_{1},\ldots,s_{6}\right\rangle 
\]
is the plausibility of rolling a 2, and 
\[
s_{1}\vee s_{2}\mid\left(s_{1}\vee s_{3}\vee s_{5}\right)\wedge\left\langle s_{1},\ldots,s_{6}\right\rangle 
\]
is the plausibility of rolling a 1 or 2 given that the outcome is
odd.

This stands in stark contrast to Carnap, who as previously mentioned
rejects such an axiomatic approach. The second argument to his confirmation
function is the \emph{evidence}, which is ``an observational report''
that ``refer{[}s{]} to facts'' \cite[pp. 19--20]{carnap}; background
knowledge about the meaning of the symbols and logical structure of
the domain is excluded. Given a situation like our die roll, in which
there is a ``family of related properties,'' exactly one of which
holds true for each individual, Carnap goes so far as to require modifying
the definition of a fundamental concept in his system, the state-description,
rather than simply including this information in the evidence \cite[p. 77]{carnap}.

\section{Invariance from Logical Equivalence\label{sec:Invariance-from-logical-equiv}}

In this and the following sections we introduce our Requirements on
the plausibility function and prove their consequences. These are
all based on preserving existing properties of CPL. We shall consider
properties of the logical consequence relation $\models$, as well
as the implication ordering $\plle X$ for a given premise $X$. 

The first property we consider is invariance under replacement of
premise or query by a logically equivalent formula.

The logical consequence relation $\models$ is invariant to replacement
of premise by a logically equivalent formula: if $X\equiv Y$ then
for all formulas $A$ we have $X\models A$ if and only if $Y\models A$.
We require that the plausibility function exhibit this same invariance.
This may be further justified by noting that the implication orderings
$\plle X$ and $\plle{_{Y}}$ are identical when $X\equiv Y$. 

The relation $\models$ is also invariant to replacement of conclusion
by a logically equivalent formula. In fact, the replacement formula
need only be logically equivalent \emph{assuming} the premise: if
$A\pleqv XB$, then $X\models A$ if and only if $X\models B$. We
require that the plausibility function exhibit this invariance also,
for the query. This may be further justified by our argument in Section
\ref{sec:implication-ordering} that we should consider $A$ and $B$
equally plausible, assuming $X$, whenever $A\pleqv XB$.

We combine these into a single requirement:
\begin{requirement}
\label{req:equivalence}If $X\equiv Y$ and $A\equiv_{X}B$ then $A\mid X=B\mid Y$.
\end{requirement}

\section{Invariance under Definition of New Symbols \label{sec:Invariance-under-definition}}

It is common in mathematical proofs to define new symbols as abbreviations
for complex expressions or formulas. The same may be done in propositional
logic: we may introduce a new propositional symbol $s$ (that appears
in neither the premises nor conclusion) and use it as an abbreviation
for some complex propositional formula $E$, by adding the definition
$s\leftrightarrow E$ to our premises. This does not invalidate any
logical consequence we already had, nor any create any new logical
consequence that does not mention $s$.

Specifically, let $s$ be a symbol not occurring in $X$, $E$, or
$A$, and define $Y=\left(s\leftrightarrow E\right)\wedge X$ . Then
$X\models A$ if and only if $Y\models A$, and consequently, $\plle X$
and $\plle Y$ are identical on $\Phi\left(\mathcal{S}\setminus\left\{ s\right\} \right)$.
We require that the plausibility function exhibit the same invariance:
\begin{requirement}
\label{req:definitions}Let $s\in\mathcal{S}$ but $s\notin\sigma\left\llbracket A,X,E\right\rrbracket $.
Then $A\mid X=A\mid\left(s\leftrightarrow E\right)\wedge X$.
\end{requirement}
One cannot evade the force of this Requirement by supposing a problem
domain with a limited set of symbols. Recall that there is only one
plausibility function, used for all problem domains, and that $\mathcal{S}$
is countably infinite. Furthermore, even if the plausibility function
were to take as a third argument a finite set of symbols from which
the query and premise are constructed, the notion of extending a domain
by defining an additional variable as a function of existing variables
would still make sense. Forbidding such extension would be an artificial
and unreasonable restriction, as one can already do this in CPL.

\subsection{Invariance under renaming}

To build some intuition for R\ref{req:definitions} we now explore
some of its more straightforward consequences, in conjunction with
R\ref{req:equivalence} (logical equivalence).

Let us write $B\left[s/C\right]$ for the result of replacing every
occurrence of symbol $s$ in formula $B$ with the formula $C$. If
$s$ and $t$ are distinct symbols, with $t$ not occurring in formulas
$A$ or $X$, then using R\ref{req:definitions} to introduce a definition
and later remove a different one gives us
\begin{align*}
A\mid X & =A\mid\left(t\leftrightarrow s\right)\wedge X\\
 & =A[s/t]\mid\left(t\leftrightarrow s\right)\wedge X[s/t]\\
 & =A[s/t]\mid\left(s\leftrightarrow t\right)\wedge X[s/t]\\
 & =A[s/t]\mid X[s/t].
\end{align*}
That is, we can rename any single symbol, replacing it throughout
$A$ and $X$ with a new symbol, and this leaves the plausibility
unchanged.

Repeating the process, the plausibility is invariant if we rename
any set of symbols $S=\left\{ s_{1},\ldots,s_{n}\right\} $ to new
symbols $T=\left\{ t_{1},\ldots,t_{n}\right\} $ not occurring in
$A$ or $X$. We can also permute the symbol names, by renaming from
$s_{1},\ldots,s_{n}$ to $t_{1},\ldots,t_{n}$ and then to a permuation
$s'_{1},\ldots,s'_{n}$ of $s_{1},\ldots,s_{n}$. That is, if we write
$B\left[s_{1}/C_{1},\ldots,s_{n}/C_{n}\right]$ for the formula obtained
by \emph{simultaneously} replacing each symbol $s_{i}$ with the formula
$C_{i}$, we have 
\[
A\mid X=A\left[s_{1}/s'_{1},\ldots,s_{n}/s'_{n}\right]\mid X\left[s_{1}/s'_{1},\ldots,s_{n}/s'_{n}\right].
\]
This \emph{result} is the same as Clayton \& Waddington's \emph{Assumption}
2.3 (translation invariance) \cite{clayton-waddington}, which they
motivate via Jaynes's ``indifference'' criterion \cite[p. 19]{ptlos}:
\begin{quote}
The robot always represents equivalent states of knowledge by equivalent
plausibility assignments. That is, if in two problems the robot's
state of knowledge is the same (except perhaps for the labeling of
the propositions), then it must assign the same plausibilities in
both.
\end{quote}
For example, if $a,b,c,d$ are distinct symbols, then the following
equalities hold: 
\begin{align*}
a\mid a\vee b & =c\mid c\vee d\\
a\mid a\rightarrow b & =b\mid b\rightarrow a
\end{align*}
Consider specifically the case where $X$ treats symbols $s$ and
$t$ symmetrically: that is, $X$ is logically equivalent to $X'=X[s/t,t/s]$.
One example would be
\begin{align*}
X & =\left(s\vee t\right)\wedge\neg\left(s\wedge t\right)\\
X' & =\left(t\vee s\right)\wedge\neg\left(t\wedge s\right).
\end{align*}
In this case we find that $s$ and $t$ must be equally plausible:
\[
s\mid X=t\mid X'=t\mid X.
\]
This result is similar in spirit to the principle of insufficient
reason: our premise $X$ provides no information that differs between
$s$ and $t$, so intuition suggests these propositions should be
equally plausible. The result is more general, however, in that $s$
and $t$ need not be mutually exclusive nor exhaustive.

Another transformation we can consider is that of replacing all occurrences
of symbol $s$ with $\neg s$ in both premise and query. As before,
let $s$ and $t$ be distinct symbols, with $t$ not occurring in
formulas $A$ nor $X$. Again we use R\ref{req:definitions} to introduce
a definition and later remove a different one; we also add a final
step that invokes the above-demonstrated invariance under renaming.
This yields the following:
\begin{align*}
A\mid X & =A\mid\left(t\leftrightarrow\neg s\right)\wedge X\\
 & =A[s/\neg\neg s]\mid\left(t\leftrightarrow\neg s\right)\wedge X[s/\neg\neg s]\\
 & =A[s/\neg t]\mid\left(t\leftrightarrow\neg s\right)\wedge X[s/\neg t]\\
 & =A[s/\neg t]\mid\left(s\leftrightarrow\neg t\right)\wedge X[s/\neg t]\\
 & =A[s/\neg t]\mid X[s/\neg t]\\
 & =A[s/\neg t][t/s]\mid X[s/\neg t][t/s]\\
 & =A[s/\neg s]\mid X[s/\neg s].
\end{align*}
That is, the plausibility is invariant to a transformation in which
we uniformly replace any single symbol with its negation throughout
both $A$ and $X$. In particular, if $X$ is logically equivalent
to $X[s/\neg s]$, then
\[
s\mid X=\neg s\mid X.
\]
This result may be viewed as an instance of the principle of insufficient
reason applied to the case of two indistinguishable possibilities.

Take note of the common pattern in the above two derivations:
\begin{enumerate}
\item Use R\ref{req:definitions} to introduce a definition of some symbol
$t$ in terms of symbol $s$ appearing in the premise or query.
\item Use logical equivalence and the definition of $t$ to rewrite premise
and query in a way that removes all occurrences of $s$ except its
occurrence in the right-hand side of the definition of $t$.
\item Use logical equivalence to rewrite the definition of $t$ in terms
of $s$ as a definition of $s$ in terms of $t$.
\item Use R\ref{req:definitions} to drop the definition of $s$, as this
symbol is now used nowhere else in the premise or query.
\end{enumerate}
Lemma \ref{lem:sample-space} in Section \ref{subsec:Reduction-to-canonical}
extends this pattern to \emph{sets} of symbols, simultaneously introducing
multiple definitions in step 1, and this yields a stronger form of
transformation invariance that subsumes the results derived here. 

\subsection{Invariance under change of variables}

Renaming symbols and swapping $s$ for $\neg s$ throughout both premise
and query are special cases of more general \emph{change of variables}
transformations. As an example of this, suppose that we are considering
a problem domain in which there is some quantity $x$ that can take
on any of $n$ discrete, ordered values $v_{1}<v_{2}<\cdots<v_{n}$.
There are two different vocabularies we might use for this domain:
\begin{enumerate}
\item Use symbols $s_{1},\ldots,s_{n}$ with the intended meaning of $s_{i}$
being ``$x=v_{i}$,'' and express ``$x\leq v_{i}$'' as $s_{1}\vee\cdots\vee s_{i}$.
\item Use symbols $t_{1},\ldots,t_{n}$ with the intended meaning of $t_{i}$
being ``$x\leq v_{i}$,'' and express ``$x=v_{i}$'' as $t_{i}\wedge\neg t_{i-1}$
when $i>1$, or just $t_{i}$ when $i=1$.
\end{enumerate}
The two vocabularies can express exactly the same propositions, so
there is no fundamental reason to choose one over the other, and it
seems that the plausibility $A\mid X$ should not depend on which
vocabulary we use. Going from one vocabulary to the other is just
a change of variables: we can express each of the $s_{i}$ in terms
of $t_{1},\ldots,t_{n}$, or we can express each of the $t_{i}$ in
terms of $s_{1},\ldots,s_{n}$.

This isn't quite enough, though. Defining
\begin{align*}
\tau_{\mathrm{st}}(A) & =A\left[s_{1}\,/\,t_{1},\,s_{2}\,/\,t_{2}\wedge\neg t_{1},\,\ldots,\,s_{n}\,/\,t_{n}\wedge\neg t_{n-1}\right]\\
\tau_{\mathrm{ts}}(B) & =B\left[t_{1}\,/\,s_{1},\,t_{2}\,/\,s_{1}\vee s_{2},\,\ldots,\,t_{n}\,/\,s_{1}\vee\cdots\vee s_{n}\right]
\end{align*}
we want $\tau_{\mathrm{st}}$ and $\tau_{\mathrm{ts}}$ to be inverses
of each other (up to logical equivalence). We find that $\tau_{\mathrm{ts}}\left(\tau_{\mathrm{st}}\left(A\right)\right)$
is logically equivalent to $A$, but 
\[
\tau_{\mathrm{st}}\left(\tau_{\mathrm{ts}}\left(B\right)\right)\equiv B\left[t_{1}\,/\,t_{1},\,t_{2}\,/\,t_{1}\vee t_{2},\ldots,\,t_{n}\,/\,t_{1}\vee\cdots\vee t_{n}\right]
\]
which is not, in general, logically equivalent to $B$. We need to
assume that $t_{i}\rightarrow t_{i+1}$ for $1\leq i<n$ to get the
desired equivalence. Such an assumption concords with the intended
meaning of $t_{i}$, and must be implied by the premise when vocabulary
2 is used. (Likewise, the premise must imply $\left\langle s_{1},\ldots,s_{n}\right\rangle $
when vocabulary 1 is used.) So this notion of change of variables
is more subtle than it appears at first glance; how do we define a
general rule that accounts for issues like this?

The solution is to define a change of variables in terms of a bijection
between
\begin{itemize}
\item the set of truth assignments satisfying the premise when vocabulary
1 is used, and
\item the set of truth assignments satisfying the premise when vocabulary
2 is used.
\end{itemize}
This motivates the following:
\begin{defn}
$f$ is a \emph{change-of-variables transformation} between the pairs
$(A,X)$ and $(A',X')$ if it is a bijection between
\begin{itemize}
\item the set of truth assignments on some $S\supseteq\sigma\left\llbracket A,X\right\rrbracket $
satisfying $X$, and
\item the set of truth assignments on some $S'\supseteq\sigma\left\llbracket A',X'\right\rrbracket $
satisfying $X'$,
\end{itemize}
with the additional property that any truth assignment $\rho$ on
$S$ satisfies $A\wedge X$ if and only if $f(\rho)$ satisfies $A'\wedge X'$.
\end{defn}
Note that the logical consequence relation trivially satisfies invariance
under change of variables:
\begin{quote}
$X\models A\Leftrightarrow X'\models A'$ if there exists a change-of-variables
transformation $f$ between $\left(A,X\right)$ and $\left(A',X'\right)$.
\end{quote}
For the plausibility function, invariance under change of variables
means the following:
\begin{quote}
$A\mid X=A'\mid X'$ if there exists a change-of-variables transformation
$f$ between $\left(A,X\right)$ and $\left(A',X'\right)$. 
\end{quote}
Invariance under definition of new symbols is a special case of invariance
under change of variables: we have $A'=A$, $X'=\left(s\leftrightarrow E\right)\wedge X$,
$S=\sigma\left\llbracket A,X,E\right\rrbracket $, $S'=S\cup\left\{ s\right\} $,
and $f(\rho)=\rho'$ where
\begin{align*}
\rho'(s) & =\rho\left\llbracket E\right\rrbracket \\
\rho'(t) & =\rho(t)\quad\mbox{if }t\in S.
\end{align*}
The inverse of $f$ maps $\rho'$ to the restriction of $\rho'$ to
$S$.

We show in Corollary \ref{cor:transformation-invariance} that R\ref{req:equivalence}
and R\ref{req:definitions} together imply invariance under change
of variables. So, given R\ref{req:equivalence}, invariance under
change of variables and invariance under definition of new symbols
are equivalent. We chose the latter as our requirement because it
is easier to explain and justify.

\subsection{Reduction to canonical form\label{subsec:Reduction-to-canonical}}

We take the first step towards our main result by showing that we
can reduce every query-premise pair to a canonical form in which the
premise merely states that we have a sample space of $n$ distinct
possibilities, and the query merely states that one of the first $m\leq n$
possibilities is true. In the following, keep in mind our convention
that $A_{1}\vee\cdots\vee A_{m}$ stands for some unsatisfiable formula
when $m=0$.
\begin{lem}
\label{lem:sample-space}Let $S\subseteq\mathcal{S}$ be finite, $A\in\Phi\left(S\right)$,
and $X\in\Phi^{+}(S)$. Then R\ref{req:equivalence} and R\ref{req:definitions}
together imply that 
\[
A\mid X=\left(t_{1}\vee\cdots\vee t_{m}\mid\left\langle t_{1},\ldots,t_{n}\right\rangle \right)
\]
where $n=\#_{S}(X)>0$, $m=\#_{S}\left(A\wedge X\right)\leq n$, and
$T=\left\{ t_{1},\ldots,t_{n}\right\} $ is any set of $n$ propositional
symbols disjoint from $S$.
\end{lem}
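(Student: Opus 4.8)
The plan is to exhibit an explicit change-of-variables transformation between $(A,X)$ and $(t_1 \vee \cdots \vee t_m,\, \langle t_1, \ldots, t_n\rangle)$, and then appeal to the fact (asserted earlier, to be formalized in Corollary~\ref{cor:transformation-invariance}) that R\ref{req:equivalence} and R\ref{req:definitions} together imply invariance under change of variables. Concretely: let $\rho_1, \ldots, \rho_n$ enumerate the truth assignments on $S$ that satisfy $X$, ordered so that $\rho_1, \ldots, \rho_m$ are exactly those that also satisfy $A$; this is possible since $n = \#_S(X)$ and $m = \#_S(A \wedge X) \le n$. Define the target premise $X' = \langle t_1, \ldots, t_n\rangle$ on $S' = T$; its satisfying truth assignments are exactly the $n$ assignments $\theta_1, \ldots, \theta_n$, where $\theta_i$ sets $t_i$ to $1$ and all other $t_j$ to $0$. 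Define $f(\rho_i) = \theta_i$. This is a bijection between the satisfying assignments of $X$ and those of $X'$, and it has the required property: $\rho_i$ satisfies $A \wedge X$ iff $i \le m$ iff $\theta_i$ satisfies $t_1 \vee \cdots \vee t_m$ (noting that $\theta_i \llbracket t_1 \vee \cdots \vee t_m \rrbracket = 1$ exactly when $i \in \{1, \ldots, m\}$, and when $m = 0$ the disjunction is unsatisfiable so no $\theta_i$ satisfies it, matching $m = 0$). Hence $f$ is a change-of-variables transformation between $(A,X)$ and $(t_1 \vee \cdots \vee t_m,\, \langle t_1, \ldots, t_n\rangle)$, and invariance gives the claimed equality.

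The one subtlety to handle carefully is the boundary cases $m = 0$ and $m = n$, together with the degenerate shapes of $\langle t_1, \ldots, t_n\rangle$; since $X$ is satisfiable we do have $n \ge 1$, so $\langle t_1, \ldots, t_n\rangle$ is a genuine formula. When $m = 0$, the query $t_1 \vee \cdots \vee t_m$ is our stipulated unsatisfiable formula (e.g. $s \wedge \neg s$), and the condition "$\rho$ satisfies $A \wedge X$ iff $f(\rho)$ satisfies $A' \wedge X'$" reads "never iff never," which holds. When $m = n$, $t_1 \vee \cdots \vee t_m$ is satisfied by every $\theta_i$, matching the fact that every $\rho_i$ satisfies $A \wedge X$. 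I should also make sure the definition of a change-of-variables transformation only requires $S \supseteq \sigma\llbracket A, X\rrbracket$ and $S' \supseteq \sigma\llbracket A', X'\rrbracket$, which is satisfied by our choice $S' = T \supseteq \sigma\llbracket t_1 \vee \cdots \vee t_m, \langle t_1, \ldots, t_n\rangle\rrbracket$ (and if $m = 0$, enlarge $S'$ or note the unsatisfiable query uses symbols we may assume lie in $T$ by choosing the canonical unsatisfiable formula over $T$).

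The main obstacle I anticipate is purely one of invocation order: this lemma is stated before Corollary~\ref{cor:transformation-invariance}, so a self-contained proof would instead thread the four-step pattern highlighted after the renaming derivations through a simultaneous family of definitions. In that case the argument becomes: first apply R\ref{req:definitions} repeatedly to introduce symbols $t_1, \ldots, t_n$ via definitions $t_i \leftrightarrow E_i$, where $E_i$ is a propositional formula over $S$ true under exactly the assignment $\rho_i$ (a full conjunction of literals pinning down $\rho_i$ on $S$); then use R\ref{req:equivalence} to rewrite, observing that $X$ together with these definitions is equivalent to $\langle t_1, \ldots, t_n\rangle$ together with the "reverse" definitions expressing each original symbol of $S$ as a disjunction of the relevant $t_i$; then peel off the definitions of the $S$-symbols via R\ref{req:definitions}; and finally check that $A$ has become (equivalent, assuming the new premise, to) $t_1 \vee \cdots \vee t_m$. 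The bookkeeping here — verifying that the forward definitions of the $t_i$ in terms of $S$ are equivalent, modulo $X$, to the backward definitions of the $S$-symbols in terms of the $t_i$ — is the delicate part, but it is exactly the set-level generalization of the single-symbol pattern already displayed in the excerpt, so I would present it as such.
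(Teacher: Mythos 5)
Your self-contained fallback is essentially the paper's own proof: introduce fresh symbols $t_1,\ldots,t_n$ by R\ref{req:definitions}, each defined as the complete conjunction of literals over $S$ picking out one satisfying assignment $\rho_i$ of $X$; use R\ref{req:equivalence} to replace the query by $t_1\vee\cdots\vee t_m$ and the premise by $\left\langle t_1,\ldots,t_n\right\rangle$ conjoined with the reverse definitions $s_j\leftrightarrow\bigvee_{i\,:\,\rho_i(s_j)=1}t_i$; then drop those reverse definitions by R\ref{req:definitions}. However, your preferred route is not blocked merely by ``invocation order'': it is circular. In the paper, invariance under change of variables (Corollary \ref{cor:transformation-invariance}) is \emph{deduced from} this lemma via Corollary \ref{cor:plausibility-from-upsilon2}; there is no earlier independent derivation of it from R\ref{req:equivalence} and R\ref{req:definitions}, and the natural way to obtain one is precisely the construction this lemma performs. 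So the bijection you exhibit, while correct, cannot be cashed in at this point in the development.

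Two further points on the fallback sketch. First, the order of your last two steps must be swapped: R\ref{req:definitions} only allows dropping the definition of a symbol that does not occur in the query, so you must first rewrite the query $A$ as $t_1\vee\cdots\vee t_m$ (legitimate by R\ref{req:equivalence}, since assuming the augmented premise $A$ is equivalent to the disjunction of the defining formulas for $t_1,\ldots,t_m$), and only then peel off the definitions of the $S$-symbols; as written, those symbols still occur in the query when you try to remove their definitions. Second, the step you defer as ``bookkeeping''\textemdash that $X$ conjoined with the forward definitions is logically equivalent to $\left\langle t_1,\ldots,t_n\right\rangle$ conjoined with the reverse definitions\textemdash is the heart of the lemma and needs an explicit argument; the paper supplies it by showing that both conjunctions have exactly the same satisfying truth assignments on $S\cup T$, namely, for each $i$, the assignment that copies $\rho_i$ on $S$, sets $t_i$ true, and sets every other $t_h$ false.
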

\begin{proof}
Let $\rho_{1},\ldots,\rho_{n}$ be the truth assignments on $S$ that
satisfy $X$, ordered so that the first $m$ also satisfy $A$. Enumerate
the elements of $S$ as $s_{1},\ldots,s_{p}$. The proof proceeds
in four steps.

\emph{Step 1. }For each $1\leq i\leq n$ and $1\leq j\leq p$ define
\begin{eqnarray*}
Z_{i} & = & L_{i,1}\wedge\cdots\wedge L_{i,p}\\
L_{i,j} & = & \begin{cases}
s_{j} & \mbox{if }\rho_{i}\left(s_{j}\right)=1\\
\neg s_{j} & \mbox{if }\rho_{i}\left(s_{j}\right)=0.
\end{cases}
\end{eqnarray*}
Note that $\rho_{i}$ is the one and only truth assignment on $S$
that satisfies $Z_{i}$.

Define the formulas 
\begin{eqnarray*}
D_{\mathrm{t},i} & = & t_{i}\leftrightarrow Z_{i}\\
D_{\mathrm{t}} & = & D_{\mathrm{t},1}\wedge\cdots\wedge D_{\mathrm{t},n}.
\end{eqnarray*}
Then by R\ref{req:definitions}, 
\begin{equation}
A\mid X=A\mid D_{\mathrm{t}}\wedge X.\label{eq:add-t-defs}
\end{equation}

\emph{Step 2}. The formulas $Z_{i}$ were constructed such that 
\[
A\wedge X\equiv Z_{1}\vee\cdots\vee Z_{m}
\]
and hence 
\[
D_{\mathrm{t}}\wedge X\models\left(A\leftrightarrow Z_{1}\vee\cdots\vee Z_{m}\right).
\]
R\ref{req:equivalence} then gives 
\begin{equation}
A\mid D_{\mathrm{t}}\wedge X=\left(t_{1}\vee\cdots\vee t_{m}\mid D_{\mathrm{t}}\wedge X\right).\label{eq:rewrite-query}
\end{equation}

\emph{Step 3}. Define the following:
\begin{align*}
I_{j} & =\left\{ i\colon1\leq i\leq n,\,\rho_{i}\left(s_{j}\right)=1\right\} \\
D_{\mathrm{s},j} & =s_{j}\leftrightarrow\bigvee_{i\in I_{j}}t_{i}\\
D_{\mathrm{s}} & =D_{\mathrm{s},1}\wedge\cdots\wedge D_{\mathrm{s},p}.
\end{align*}
Consider how to construct the set of truth assignments $\tilde{\rho}$
on $S\cup T$ that satisfy both $\left\langle t_{1},\ldots,t_{n}\right\rangle $
and $D_{\mathrm{s}}$:
\begin{enumerate}
\item Choose any $i\in\left\{ 1,\ldots,n\right\} $.
\item Set $\tilde{\rho}\left(t_{i}\right)=1$ and $\tilde{\rho}\left(t_{h}\right)=0$
for $h\neq i$.
\item For $j\in\left\{ 1,\ldots,p\right\} $, set $\tilde{\rho}\left(s_{j}\right)$
to the unique value required to satisfy $D_{\mathrm{s},j}$; this
value is $1$ iff $i\in I_{j}$, and $i\in I_{j}$ iff $\rho_{i}\left(s_{j}\right)=1$,
so the required value is just $\rho_{i}\left(s_{j}\right)$.
\end{enumerate}
1 and 2 construct all the ways of ensuring that $\left\langle t_{1},\ldots,t_{n}\right\rangle $
is satisfied, and $3$ then is the only way to finish defining $\tilde{\rho}$
that satisfies $D_{\mathrm{s}}$.

Similarly, consider how to construct the set of truth assignments
$\tilde{\rho}$ on $S\cup T$ that satisfy both $X$ and $D_{\mathrm{t}}$:
\begin{enumerate}
\item Choose any $i\in\left\{ 1,\ldots,n\right\} $. Recall that $\rho_{i}$
is one of the truth assignments satisfying $X$.
\item For $j\in\left\{ 1,\ldots,p\right\} $, set $\tilde{\rho}\left(s_{j}\right)=\rho_{i}\left(s_{j}\right)$.
\item For $h\in\left\{ 1,\ldots,n\right\} $, set $\tilde{\rho}\left(t_{h}\right)$
to the unique value required to satisfy $D_{\mathrm{t},h}$. This
is just $\rho_{i}\left\llbracket Z_{h}\right\rrbracket $, which is
1 for $h=i$ and 0 for $h\neq i$.
\end{enumerate}
1 and 2 construct all the ways of ensuring that $X$ is satisfied,
and 3 then is the only way to finish defining $\tilde{\rho}$ that
satisfies $D_{\mathrm{t}}$.

But these two sets of truth assignments are the same set! Therefore
\[
D_{\mathrm{s}}\wedge\left\langle t_{1},\ldots,t_{n}\right\rangle \equiv D_{\mathrm{t}}\wedge X
\]
and so, by R\ref{req:equivalence}, 
\begin{equation}
\left(t_{1}\vee\cdots\vee t_{m}\mid D_{\mathrm{t}}\wedge X\right)=\left(t_{1}\vee\cdots\vee t_{m}\mid D_{\mathrm{s}}\wedge\left\langle t_{1},\ldots,t_{n}\right\rangle \right).\label{eq:rewrite-premise}
\end{equation}

\emph{Step 4}. Using R\ref{req:definitions} we have 
\begin{equation}
\left(t_{1}\vee\cdots\vee t_{m}\mid D_{\mathrm{s}}\wedge\left\langle t_{1},\ldots,t_{n}\right\rangle \right)=\left(t_{1}\vee\cdots\vee t_{m}\mid\left\langle t_{1},\ldots,t_{n}\right\rangle \right)\label{eq:drop-s-defs}
\end{equation}
since the symbols $s_{1},\ldots,s_{p}$ appear only on the left-hand-sides
of the definitions in $D_{\mathrm{s}}$.

Combining (\ref{eq:add-t-defs})\textendash (\ref{eq:drop-s-defs})
yields the theorem.
\end{proof}

\subsection{Additional consequences}

In light of Lemma \ref{lem:sample-space} we define the following:
\begin{defn}
For any $n>0$ and $0\leq m\leq n$,
\[
\Upsilon_{2}\left(m,n\right)=\left(s_{1}\vee\cdots\vee s_{m}\mid\left\langle s_{1},\ldots,s_{n}\right\rangle \right),
\]
where $s_{1},\ldots,s_{n}\in\mathcal{S}$ are $n$ distinct propositional
symbols.
\end{defn}
We may then restate Lemma \ref{lem:sample-space} as follows:
\begin{cor}
\label{cor:plausibility-from-upsilon2}Let $A\in\Phi\left(S\right)$
and $X\in\Phi^{+}\left(S\right)$ for some finite $S\subseteq\mathcal{S}$.
Then R\ref{req:equivalence} and R\ref{req:definitions} together
imply that
\end{cor}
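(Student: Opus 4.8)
The plan is to obtain this corollary as an essentially immediate reformulation of Lemma \ref{lem:sample-space}. Write $n=\#_S(X)$ and $m=\#_S(A\wedge X)$. Since $X$ is satisfiable we have $n>0$, and since every truth assignment satisfying $A\wedge X$ also satisfies $X$ we have $0\le m\le n$, so $\Upsilon_2(m,n)$ is defined. Because $S$ is finite and $\mathcal{S}$ is countably infinite we may choose a set $T=\{t_1,\ldots,t_n\}$ of $n$ propositional symbols disjoint from $S$, and Lemma \ref{lem:sample-space} then gives $A\mid X=\left(t_1\vee\cdots\vee t_m\mid\left\langle t_1,\ldots,t_n\right\rangle\right)$. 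It remains only to identify this value with $\Upsilon_2(m,n)$.

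The one point requiring a word of justification is that $\Upsilon_2$ is well-defined, i.e.\ that $\left(s_1\vee\cdots\vee s_m\mid\left\langle s_1,\ldots,s_n\right\rangle\right)$ does not depend on which $n$ distinct symbols $s_1,\ldots,s_n$ are used. I would derive this from the renaming invariance already established in Section \ref{sec:Invariance-under-definition} from R\ref{req:equivalence} and R\ref{req:definitions}: given two lists of $n$ distinct symbols, first rename the first list (position by position) to a list of fresh symbols disjoint from both lists, then rename the fresh list to the second list; each renaming leaves the plausibility unchanged, so the two values coincide. The detour through a fresh intermediate list is what handles the case in which the two lists overlap. Applying this with the list $t_1,\ldots,t_n$ and the list of symbols used in the definition of $\Upsilon_2$ yields $\left(t_1\vee\cdots\vee t_m\mid\left\langle t_1,\ldots,t_n\right\rangle\right)=\Upsilon_2(m,n)$, and combining with the previous paragraph gives $A\mid X=\Upsilon_2\!\left(\#_S(A\wedge X),\,\#_S(X)\right)$.

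There is no real obstacle here: all the substantive work was done in the proof of Lemma \ref{lem:sample-space}, and this corollary is pure bookkeeping. The only item that could be overlooked is the well-definedness of $\Upsilon_2$, and the mild care needed there (routing the renaming through a fresh intermediate symbol list so that overlapping lists are covered), but this is entirely routine given the renaming invariance in hand.
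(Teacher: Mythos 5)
Your proposal is correct and is essentially the paper's own argument: the substance is a single application of Lemma \ref{lem:sample-space}, together with the observation that the particular symbols used in the definition of $\Upsilon_{2}$ do not matter. The paper settles that last point by a second application of Lemma \ref{lem:sample-space}, choosing $t_{1},\ldots,t_{n}$ disjoint from both $\sigma\left\llbracket A,X\right\rrbracket$ and $\left\{ s_{1},\ldots,s_{n}\right\}$, whereas you appeal to the renaming invariance of Section \ref{sec:Invariance-under-definition} routed through a fresh intermediate list of symbols; the two devices are interchangeable, and both rest only on R\ref{req:equivalence} and R\ref{req:definitions}.
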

\[
A\mid X=\Upsilon_{2}\left(\#_{S}\left(A\wedge X\right),\#_{S}\left(X\right)\right).
\]

\begin{proof}
Let $m=\#_{S}\left(A\wedge X\right)$ and $n=\#_{S}\left(X\right)>0$.
Choose any $n$ symbols $t_{1},\ldots,t_{n}\in\mathcal{S}$ disjoint
from both $\sigma\left\llbracket A,X\right\rrbracket $ and the set
of symbols $\left\{ s_{1},\ldots,s_{n}\right\} $ in the definition
of $\Upsilon_{2}$. Then two applications of Lemma \ref{lem:sample-space}
yields
\begin{align*}
A\mid X & =\left(t_{1}\vee\cdots\vee t_{m}\mid\left\langle t_{1},\ldots,t_{n}\right\rangle \right)\\
 & =\left(s_{1}\vee\cdots\vee s_{m}\mid\left\langle s_{1},\ldots,s_{n}\right\rangle \right)\\
 & =\Upsilon_{2}(m,n).
\end{align*}
\end{proof}
We also obtain invariance under change of variables as an immediate
consequence: 
\begin{cor}
\label{cor:transformation-invariance} Let $f$ be a change-of-variables
transformation between $\left(A,X\right)$ and $\left(A',X'\right)$.
Then R\ref{req:equivalence} and R\ref{req:definitions} together
imply that 
\[
A'\mid X'=A\mid X.
\]
\end{cor}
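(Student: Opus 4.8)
The plan is to push both $A\mid X$ and $A'\mid X'$ through Corollary \ref{cor:plausibility-from-upsilon2}, which reduces each to a value of $\Upsilon_2$ determined entirely by two counts of truth assignments, and then to observe that the bijection $f$ forces those counts --- and hence the two values of $\Upsilon_2$ --- to agree. First I would fix finite symbol sets $S\supseteq\sigma\llbracket A,X\rrbracket$ and $S'\supseteq\sigma\llbracket A',X'\rrbracket$ witnessing that $f$ is a change-of-variables transformation, so that $f$ is a bijection from the set of truth assignments on $S$ satisfying $X$ onto the set of truth assignments on $S'$ satisfying $X'$, with $\rho$ satisfying $A\wedge X$ exactly when $f(\rho)$ satisfies $A'\wedge X'$. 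Since $A\mid X$ and $A'\mid X'$ lie in the domain of the plausibility function, $X$ and $X'$ are satisfiable, so both sets are nonempty and $\#_S(X)>0$.

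Next I would extract the two counting identities. Because $f$ is a bijection between the truth assignments on $S$ satisfying $X$ and those on $S'$ satisfying $X'$, we get $\#_S(X)=\#_{S'}(X')$; call this common value $n$. A truth assignment satisfying $A\wedge X$ also satisfies $X$, hence lies in the domain of $f$, and the defining property of $f$ (used together with $f^{-1}$ for the reverse inclusion) shows that $f$ restricts to a bijection between the truth assignments on $S$ satisfying $A\wedge X$ and those on $S'$ satisfying $A'\wedge X'$; hence $\#_S(A\wedge X)=\#_{S'}(A'\wedge X')$, a common value $m$ with $0\le m\le n$.

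Finally I would invoke Corollary \ref{cor:plausibility-from-upsilon2} once for $(A,X)$ with the set $S$ and once for $(A',X')$ with the set $S'$, obtaining $A\mid X=\Upsilon_2(\#_S(A\wedge X),\#_S(X))=\Upsilon_2(m,n)$ and likewise $A'\mid X'=\Upsilon_2(m,n)$, so that $A'\mid X'=A\mid X$. I do not expect a genuine obstacle here --- the argument is Corollary \ref{cor:plausibility-from-upsilon2} plus elementary bijection bookkeeping, and R\ref{req:equivalence} and R\ref{req:definitions} enter only through that corollary. The one point meriting a sentence of care is finiteness of the witnessing symbol sets: the counts $\#_S$, $\#_{S'}$ make sense only for finite $S$, $S'$, so one should note that in the definition of a change-of-variables transformation --- and in every application of it, such as the reduction in Lemma \ref{lem:sample-space} and the special case of R\ref{req:definitions} --- those sets may be taken finite, which makes $f$ a bijection of finite sets and legitimizes the counting steps above.
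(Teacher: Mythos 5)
Your proposal is correct and follows essentially the same route as the paper: the paper's proof also extracts $\#_S(X)=\#_{S'}(X')$ and $\#_S(A\wedge X)=\#_{S'}(A'\wedge X')$ from the bijection $f$ and then concludes via Corollary \ref{cor:plausibility-from-upsilon2}. Your extra remarks on the restriction of $f$ to the $A\wedge X$-satisfying assignments and on the finiteness of the witnessing symbol sets are sound bookkeeping that the paper leaves implicit.
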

\begin{proof}
Let $f$ map from truth assignments on $S\supseteq\sigma\left\llbracket A,X\right\rrbracket $
to truth assignments on $S'\supseteq\sigma\left\llbracket A',X'\right\rrbracket $.
Then $\#_{S}\left(X\right)=\#_{S'}\left(X'\right)$ and $\#_{S}\left(A\wedge X\right)=\#_{S'}\left(A'\wedge X'\right)$;
the result then follows from Corollary \ref{cor:plausibility-from-upsilon2}.
\end{proof}

\section{Invariance under Addition of Irrelevant Information \label{sec:Invariance-under-addition-of-irrelevant}}

Suppose we are interested in a problem domain whose concepts are represented
by the propositional symbols in some set $S$. A formula $Y$ containing
no symbol from $S$ tells us nothing about this domain; it is irrelevant
information. Adding $Y$ to the premise does not allow us to draw
any new conclusions involving only symbols in $S$.

Specifically, let $Y$ be a satisfiable formula having no symbols
in common with $X$ or $A$, and define $Z=Y\wedge X$. Then $X\models A$
if and only if $Z\models A$, and consequently the implication orderings
$\plle X$ and $\plle Z$ are identical on $\Phi\left(\mathcal{S}\setminus\sigma\left\llbracket Y\right\rrbracket \right)$.
We require the plausibility function to be invariant in the same way:
\begin{requirement}
\label{req:irrelevance}Let $Y$ be a satisfiable formula with $\sigma\left\llbracket X,A\right\rrbracket \cap\sigma\left\llbracket Y\right\rrbracket =\emptyset$.
Then $A\mid X=A\mid Y\wedge X$.
\end{requirement}
Again, one cannot evade the force of this Requirement by supposing
a problem domain with a limited set of symbols, as discussed for R\ref{req:definitions}.
Furthermore, even if we were to associate a finite set of allowable
symbols with each different problem domain, the notion of combining
two unrelated problem domains into one would still make sense. Forbidding
such a combining operation would be an artificial and unreasonable
restriction, as one can already do this in CPL.

\subsection{Independence}

The Requirements so far do not force $A\mid X$ to be any sort of
conditional probability; but \emph{if} $A\mid X$ is the conditional
probability of $A$ given $X$ for some probability distribution,
R\ref{req:irrelevance} implies that we don't come pre-supplied with
dependencies between the atomic propositions. Any such dependencies
have to be created by information in $X$. This is in line with our
intention that the plausibility function be \emph{universal}, one
single function used in all problem domains, computed using no source
of information other than the query and premise themselves, with any
information needed to distinguish different problem domains required
to be included in the premise.

Carnap\textquoteright s proposed confirmation function $c^{*}$ in
particular violates R\ref{req:irrelevance}, as it imposes a probabilistic
dependency between any two atomic sentences having the same predicate
and the same number of distinct arguments. In particular, it is a
violation of R\ref{req:irrelevance} that, for distinct individual
constants $a_{1},\ldots,a_{k+1}$ and monadic predicate $\pi$, we
have
\[
c^{*}\left(\pi\left(a_{k+1}\right)\right)=\frac{1}{2}
\]
but 
\[
c^{*}\left(\pi\left(a_{k+1}\right)\mid\pi\left(a_{1}\right)\wedge\cdots\wedge\pi\left(a_{k}\right)\right)\approx1\mbox{ for large }k.
\]
Carnap finds it necessary to introduce this dependency between atomic
sentences to allow induction, but as we will show in Section \ref{subsec:Uniform-Versus-Non-uniform},
the problem arises only because he omits background information from
the premise / evidence. Once the necessary background information
is included in the premise there is no longer a violation of R\ref{req:irrelevance}.

\subsection{Scale invariance of $\Upsilon_{2}$}

Suppose that $S=\sigma\left\llbracket A,X\right\rrbracket $ and $T$
is obtained from $S$ by adding $r$ symbols not found in $S$. Then
$\#_{T}(X)=2^{r}\cdot\#_{S}(X)$ and $\#_{T}\left(A\wedge X\right)=2^{r}\cdot\#_{S}\left(A\wedge X\right)$,
from which we conclude that $\Upsilon_{2}\left(2^{r}m,2^{r}n\right)=\Upsilon_{2}(m,n)$.
Adding R\ref{req:irrelevance} allows us to extend this scale invariance
to multipliers $k$ that are not powers of 2, and hence to show that
$A\mid X$ is a function only of the \emph{ratio} of $\#_{S}\left(A\wedge X\right)$
to $\#_{S}(X)$.
\begin{lem}
\label{lem:rescaling-upsilon}Suppose that R\ref{req:equivalence},
R\ref{req:definitions}, and R\ref{req:irrelevance} hold. Then for
every $n,k>0$ and $0\leq m\leq n$,
\[
\Upsilon_{2}\left(km,kn\right)=\Upsilon_{2}\left(m,n\right).
\]
\end{lem}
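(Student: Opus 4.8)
The plan is to use Requirement \ref{req:irrelevance} to pad the standard $n$-urn premise with an irrelevant formula having exactly $k$ models, and then to re-read the resulting plausibility through Corollary \ref{cor:plausibility-from-upsilon2}.

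First I would fix $n,k>0$ and $0\leq m\leq n$ and build a ``multiplier gadget.'' Choose $r\geq1$ with $2^{r}\geq k$ and symbols $u_{1},\ldots,u_{r}$ disjoint from the symbols $s_{1},\ldots,s_{n}$ used in the definition of $\Upsilon_{2}(m,n)$; among the $2^{r}$ full minterms over $u_{1},\ldots,u_{r}$ pick any $k$ of them, call them $Z_{1},\ldots,Z_{k}$, and set $Y=Z_{1}\vee\cdots\vee Z_{k}$. Then $Y$ is satisfiable, $\sigma\llbracket Y\rrbracket=\{u_{1},\ldots,u_{r}\}$, and $\#_{\{u_{1},\ldots,u_{r}\}}(Y)=k$. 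Since $Y$ shares no symbol with $s_{1}\vee\cdots\vee s_{m}$ or with $\left\langle s_{1},\ldots,s_{n}\right\rangle $, Requirement \ref{req:irrelevance} gives
\[
\Upsilon_{2}(m,n)=\bigl(s_{1}\vee\cdots\vee s_{m}\mid\left\langle s_{1},\ldots,s_{n}\right\rangle \bigr)=\bigl(s_{1}\vee\cdots\vee s_{m}\mid Y\wedge\left\langle s_{1},\ldots,s_{n}\right\rangle \bigr).
\]

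Next I would evaluate the right-hand side with Corollary \ref{cor:plausibility-from-upsilon2}, taking $S=T:=\{s_{1},\ldots,s_{n}\}\cup\{u_{1},\ldots,u_{r}\}$, which is finite and contains all symbols of both query and premise. This yields $\Upsilon_{2}(m,n)=\Upsilon_{2}\bigl(\#_{T}((s_{1}\vee\cdots\vee s_{m})\wedge Y\wedge\left\langle s_{1},\ldots,s_{n}\right\rangle ),\,\#_{T}(Y\wedge\left\langle s_{1},\ldots,s_{n}\right\rangle )\bigr)$. The remaining task is the counting identity. Because $T$ is the disjoint union of $\{s_{1},\ldots,s_{n}\}$, which contains all symbols of the urn formulas, and $\{u_{1},\ldots,u_{r}\}$, which contains all symbols of $Y$, every truth assignment on $T$ splits uniquely into its two restrictions, and a conjunction of a formula over the first block with a formula over the second is satisfied exactly when both restrictions satisfy their respective conjuncts. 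Hence $\#_{T}(Y\wedge\left\langle s_{1},\ldots,s_{n}\right\rangle )=\#_{\{s_{1},\ldots,s_{n}\}}(\left\langle s_{1},\ldots,s_{n}\right\rangle )\cdot\#_{\{u_{1},\ldots,u_{r}\}}(Y)=nk$, and likewise $\#_{T}((s_{1}\vee\cdots\vee s_{m})\wedge Y\wedge\left\langle s_{1},\ldots,s_{n}\right\rangle )=mk$ (the first factor being $m$ also in the edge case $m=0$, where $s_{1}\vee\cdots\vee s_{m}$ is unsatisfiable). Substituting yields $\Upsilon_{2}(m,n)=\Upsilon_{2}(mk,nk)$.

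The main obstacle is bookkeeping rather than insight: one must confirm the multiplicativity of $\#$ over disjoint symbol sets and check that every hypothesis of Corollary \ref{cor:plausibility-from-upsilon2} is met, in particular that $\#_{T}(Y\wedge\left\langle s_{1},\ldots,s_{n}\right\rangle )=nk>0$, which holds since $n,k\geq1$. Everything else, including the construction of a satisfiable formula with a prescribed number of models, is routine; note that no appeal to the power-of-two special case is required, since the gadget $Y$ realizes an arbitrary positive multiplier directly.
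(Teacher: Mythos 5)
Your proof is correct and follows essentially the same route as the paper: pad the urn premise with a fresh, irrelevant, satisfiable formula having exactly $k$ models, invoke R\ref{req:irrelevance}, and re-evaluate the count via Corollary \ref{cor:plausibility-from-upsilon2}. The only cosmetic difference is the choice of gadget: the paper simply uses $\left\langle s_{21},\ldots,s_{2k}\right\rangle$ over $k$ fresh symbols, whereas you build $Y$ as a disjunction of $k$ minterms over $r$ symbols with $2^{r}\geq k$; both have exactly $k$ satisfying assignments, so the arguments coincide.
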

\begin{proof}
Let $S_{1}=\left\{ s_{11},\ldots,s_{1n}\right\} $ and $S_{2}=\left\{ s_{21},\ldots,s_{2k}\right\} $
be two disjoint sets of propositional symbols. There are $kn$ truth
assignments on $S_{1}\cup S_{2}$ satisfying both $\left\langle s_{21},\ldots,s_{2k}\right\rangle $
and $\left\langle s_{11},\ldots,s_{1n}\right\rangle $, and $km$
of these truth assignments also satisfy $s_{11}\vee\cdots\vee s_{1m}$.
Then 
\begin{eqnarray*}
\Upsilon_{2}\left(m,n\right) & = & \left(s_{11}\vee\cdots\vee s_{1m}\mid\left\langle s_{11},\ldots,s_{1n}\right\rangle \right)\\
 & = & \left(s_{11}\vee\cdots\vee s_{1m}\mid\left\langle s_{21},\ldots,s_{2k}\right\rangle \wedge\left\langle s_{11},\ldots,s_{1n}\right\rangle \right)\\
 & = & \Upsilon_{2}\left(km,kn\right).
\end{eqnarray*}
The first and third equalities follows from Corollary \ref{cor:plausibility-from-upsilon2}.
The second equality follows from R\ref{req:irrelevance}, invariance
under addition of irrelevant information.
\end{proof}
\begin{defn}
For any $n>0$ and $0\leq m\leq n$, 
\[
\Upsilon_{1}\left(\frac{m}{n}\right)=\Upsilon_{2}\left(m,n\right).
\]
\end{defn}
Lemma \ref{lem:rescaling-upsilon} ensures that $\Upsilon_{1}\left(r\right)$
is uniquely defined for any rational $r$ in the unit interval when
the appropriate Requirements hold. We then obtain the following:
\begin{cor}
\label{cor:sample-space-upsilon}Let $S\subseteq\mathcal{S}$ be finite,
$A\in\Phi\left(S\right)$, and $X\in\Phi^{+}(S)$. Then R\ref{req:equivalence},
R\ref{req:definitions}, and R\ref{req:irrelevance} together imply
that 
\[
A\mid X=\Upsilon_{1}\left(\frac{\#_{S}\left(A\wedge X\right)}{\#_{S}\left(X\right)}\right).
\]
\end{cor}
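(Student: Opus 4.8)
The plan is to obtain this as a one-line consequence of Corollary \ref{cor:plausibility-from-upsilon2} together with the definition of $\Upsilon_1$. First I would set $m=\#_S(A\wedge X)$ and $n=\#_S(X)$. Since $X\in\Phi^+(S)$ is satisfiable we have $n>0$, and since every truth assignment on $S$ satisfying $A\wedge X$ also satisfies $X$ we have $0\le m\le n$; thus the fraction $m/n$ lies in the unit interval and $\Upsilon_1(m/n)$ is a legitimate expression.

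Next I would apply Corollary \ref{cor:plausibility-from-upsilon2}, which requires only R\ref{req:equivalence} and R\ref{req:definitions}, to conclude $A\mid X=\Upsilon_2(m,n)$. Then I would invoke the definition $\Upsilon_1(m/n)=\Upsilon_2(m,n)$ to rewrite the right-hand side, giving $A\mid X=\Upsilon_1(\#_S(A\wedge X)/\#_S(X))$. The role of R\ref{req:irrelevance} is entirely hidden in this last step: it enters only through Lemma \ref{lem:rescaling-upsilon}, which is what guarantees that $\Upsilon_1(r)$ is \emph{well-defined} — i.e.\ that $\Upsilon_2(m,n)$ depends only on the ratio $m/n$ — so that the symbol $\Upsilon_1(m/n)$ actually names a unique plausibility value rather than being ambiguous.

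There is essentially no obstacle here: all the real work was already carried out in Lemma \ref{lem:sample-space}/Corollary \ref{cor:plausibility-from-upsilon2} (reduction to an $N$-urn) and in Lemma \ref{lem:rescaling-upsilon} (scale invariance of $\Upsilon_2$). The only point that needs a word of care is the well-definedness of $\Upsilon_1$ just described, and that is precisely the statement of Lemma \ref{lem:rescaling-upsilon}, which we are entitled to assume. So the proof will be a short two- or three-line chain of equalities citing Corollary \ref{cor:plausibility-from-upsilon2} and the definition of $\Upsilon_1$.
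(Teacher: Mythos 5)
Your proposal is correct and is essentially the paper's own proof: it sets $m=\#_S(A\wedge X)$, $n=\#_S(X)$, cites Corollary \ref{cor:plausibility-from-upsilon2} for $A\mid X=\Upsilon_2(m,n)$, and passes to $\Upsilon_1(m/n)$ via Lemma \ref{lem:rescaling-upsilon}. Your explicit remark that R\ref{req:irrelevance} enters only through the well-definedness of $\Upsilon_1$ is exactly the right observation and matches the paper's reasoning.
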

\begin{proof}
Let $m=\#_{S}\left(A\wedge X\right)$ and $n=\#_{S}\left(X\right)$.
From Corollary \ref{cor:plausibility-from-upsilon2} and Lemma \ref{lem:rescaling-upsilon}
we get
\[
A\mid X=\Upsilon_{2}\left(m,n\right)=\Upsilon_{1}\left(m/n\right).
\]
\end{proof}

\section{Preservation of Existing Distinctions in Degree of Plausibility \label{sec:Ordering-Properties}}

Our final requirement is that the plausibility function be consistent
with the implication ordering for the premise. Strictly more plausible
queries, according to the implication ordering, must yield strictly
greater plausibility values. 
\begin{requirement}
\label{req:ordering}\label{req:last}There is a partial order $\leq_{\mathbb{P}}$
on $\mathbb{P}$ such that, for any satisfiable formula $X$, if $A\pllt XB$
then $A\mid X<_{\mathbb{P}}B\mid X$.
\end{requirement}
As usual, we understand $p_{1}<_{\mathbb{P}}p_{2}$ to mean $p_{1}\leq_{\mathbb{P}}p_{2}$
and $p_{1}\neq p_{2}$.

The previous Requirements all have the effect of collapsing together
what might otherwise be distinct plausibilities, but say nothing about
when plausibilities must remain distinct from each other. They do
not even rule out the possibility that all plausibilities collapse
down to a single value. Adding R\ref{req:ordering} prevents any further
collapse of plausibility values beyond that of Corollary \ref{cor:sample-space-upsilon},
as we prove with Lemma \ref{lem:upsilon-is-strictly-increasing} below.

\subsection{A too-simple plausibility function}

Suppose that we choose the plausibility function to be a direct translation
of the logical consequence relation, defining
\[
A\mid X=\begin{cases}
\mathsf{F} & \mbox{if }X\models\neg A\\
\mathsf{T} & \mbox{if }X\models A\\
\mathsf{u} & \mbox{otherwise}
\end{cases}
\]
with $\mathbb{P}=\left\{ \mathsf{F},\mathsf{u},\mathsf{T}\right\} $
and $\mathsf{F}<_{\mathbb{P}}\mathsf{u}<_{\mathbb{P}}\mathsf{T}$.
It is straightforward to verify that this definition satisfies R\ref{req:equivalence},
R\ref{req:definitions}, and R\ref{req:irrelevance}, by appealing
to the corresponding properties of the logical equivalence relation
$\models$. However, this definition violates R\ref{req:ordering},
since R\ref{req:ordering} implies that $\mathbb{P}$ must be infinite.

To see this, suppose that $\mathbb{P}$ is finite. Choose $n>\left|\mathbb{P}\right|$
distinct symbols $s_{1},\ldots,s_{n}$ and define 
\[
X=\bigwedge_{i=1}^{n-1}\left(s_{i}\rightarrow s_{i+1}\right).
\]
Then 
\[
s_{1}\pllt Xs_{2}\pllt X\cdots\pllt Xs_{n}
\]
and R\ref{req:ordering} therefore mandates that 
\[
s_{1}\mid X<s_{2}\mid X<\cdots<s_{n}\mid X;
\]
but this cannot be, as $\mathbb{P}$ contains fewer than $n$ elements.

\subsection{Probability from plausibility}

We proceed with the proof of our main result, starting with a lemma.
\begin{lem}
\label{lem:upsilon-is-strictly-increasing}If R1\textendash R\ref{req:last}
hold then for all $r,r'\in\mathbb{Q}_{01}\triangleq\mathbb{Q}\cap[0,1]$
we have
\begin{align*}
\Upsilon_{1}\left(r\right)=\Upsilon_{1}\left(r'\right) & \Leftrightarrow r=r'\\
\Upsilon_{1}\left(r\right)<_{\mathbb{P}}\Upsilon\left(r'\right) & \Leftrightarrow r<r'.
\end{align*}
\end{lem}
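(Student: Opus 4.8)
The plan is to reduce the statement to a single monotonicity fact about $\Upsilon_2$, namely that $\Upsilon_2(m,n) <_{\mathbb{P}} \Upsilon_2(m+1,n)$ whenever $0 \le m < n$, and then bootstrap from there using the rescaling already granted by Lemma \ref{lem:rescaling-upsilon}. First I would observe that both claimed equivalences follow once we know $\Upsilon_1$ is strictly increasing on $\mathbb{Q}_{01}$ with respect to $<_{\mathbb{P}}$: strict monotonicity gives injectivity (hence the $\Leftrightarrow$ in the first line, the reverse direction being trivial), and it directly gives the forward direction of the second line; the reverse direction of the second line is the contrapositive of monotonicity applied to $r' \le r$ together with injectivity. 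So the whole lemma collapses to: for rationals $r < r'$ in $[0,1]$, $\Upsilon_1(r) <_{\mathbb{P}} \Upsilon_1(r')$.

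Next I would set up the key special case. Fix $n > 0$ and $0 \le m < n$. Take $n$ distinct symbols $s_1,\ldots,s_n$ and let $X = \langle s_1,\ldots,s_n\rangle$. Writing $A = s_1 \vee \cdots \vee s_m$ and $B = s_1 \vee \cdots \vee s_{m+1}$, we have $X \models A \rightarrow B$, and since the truth assignment satisfying $s_{m+1} \wedge X$ satisfies $B$ but not $A$, we do not have $X \models B \rightarrow A$; thus $A \pllt X B$. By R\ref{req:ordering} this yields $A \mid X <_{\mathbb{P}} B \mid X$, i.e. $\Upsilon_2(m,n) <_{\mathbb{P}} \Upsilon_2(m+1,n)$. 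Chaining this, for any $0 \le m \le m' \le n$ we get $\Upsilon_2(m,n) \le_{\mathbb{P}} \Upsilon_2(m',n)$, with strict inequality when $m < m'$ (using that $\le_{\mathbb{P}}$ is a partial order and the finitely many strict steps compose to a strict inequality by antisymmetry).

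Then I would handle two rationals $r = a/b < r' = c/d$ in $[0,1]$ by putting them over a common denominator: let $N = bd$ (or $\mathrm{lcm}(b,d)$), so $r = p/N$ and $r' = q/N$ with integers $0 \le p < q \le N$. By Lemma \ref{lem:rescaling-upsilon} (via the definition of $\Upsilon_1$), $\Upsilon_1(r) = \Upsilon_2(p,N)$ and $\Upsilon_1(r') = \Upsilon_2(q,N)$, and the previous paragraph gives $\Upsilon_2(p,N) <_{\mathbb{P}} \Upsilon_2(q,N)$, hence $\Upsilon_1(r) <_{\mathbb{P}} \Upsilon_1(r')$. This establishes strict monotonicity, and the two stated equivalences follow as described above; in particular $\Upsilon_1$ is well-defined and injective on $\mathbb{Q}_{01}$.

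The main obstacle I anticipate is purely bookkeeping-flavoured but worth care: $\le_{\mathbb{P}}$ is only assumed to be a partial order, not a total order, so when I chain the one-step strict inequalities $\Upsilon_2(m,n) <_{\mathbb{P}} \cdots <_{\mathbb{P}} \Upsilon_2(m',n)$ I must argue that a composition of several $<_{\mathbb{P}}$ steps is again a strict inequality — this needs transitivity of $\le_{\mathbb{P}}$ together with antisymmetry (if the endpoints were equal, one of the intermediate strict steps would be contradicted). A second minor point is making sure the common-denominator reduction respects $0 \le p < q \le N$ and that $N > 0$, so that every invocation of $\Upsilon_2$ and $\Upsilon_1$ is legitimate; this is immediate from $0 \le r < r' \le 1$.
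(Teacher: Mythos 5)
Your proposal is correct and follows essentially the same route as the paper: put $r,r'$ over a common denominator, use the urn premise $\left\langle s_{1},\ldots,s_{n}\right\rangle$ and R\ref{req:ordering} on nested disjunctions of the $s_{i}$, and then extract both biconditionals from strict monotonicity via antisymmetry of $\leq_{\mathbb{P}}$. The only difference is that you chain unit steps $\Upsilon_{2}(m,n)<_{\mathbb{P}}\Upsilon_{2}(m+1,n)$, whereas the paper applies R\ref{req:ordering} once to $s_{1}\vee\cdots\vee s_{m}\prec_{X}s_{1}\vee\cdots\vee s_{m'}$ for any $m<m'$, which makes the chaining (and your careful point about composing strict inequalities in a mere partial order) unnecessary, though your version of it is sound.
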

\begin{proof}
We may express $r$ and $r'$ as ratios with a common denominator
$n>0$ as $r=m/n$ and $r'=m'/n$. Using R\ref{req:ordering} and
writing $X$ for $\left\langle s_{1},\ldots,s_{n}\right\rangle $
we have
\begin{align}
r<r' & \Rightarrow m<m'\nonumber \\
 & \Rightarrow\left(s_{1}\vee\cdots\vee s_{m}\right)\pllt X\left(s_{1}\vee\cdots\vee s_{m'}\right)\nonumber \\
 & \Rightarrow\Upsilon_{1}(r)<_{\mathbb{P}}\Upsilon_{1}(r').\label{eq:upsilon-1-is-strict}
\end{align}
Furthermore, using antisymmetry of the partial order $\leq_{\mathbb{P}}$
and (\ref{eq:upsilon-1-is-strict}),
\begin{align*}
r\not<r' & \Rightarrow\left(r=r'\right)\vee\left(r'<r\right)\\
 & \Rightarrow\Upsilon_{1}\left(r'\right)\leq_{\mathbb{P}}\Upsilon_{1}\left(r\right)\\
 & \Rightarrow\Upsilon_{1}\left(r\right)\not<_{\mathbb{P}}\Upsilon_{1}\left(r'\right).
\end{align*}
Trivially,
\[
r=r'\Rightarrow\Upsilon_{1}\left(r\right)=\Upsilon_{1}\left(r'\right).
\]
Furthermore, using (\ref{eq:upsilon-1-is-strict}) again,
\begin{align*}
r\neq r' & \Rightarrow\left(r<r'\right)\vee\left(r'<r\right)\\
 & \Rightarrow\Upsilon_{1}\left(r\right)\neq\Upsilon_{1}\left(r'\right).
\end{align*}
\end{proof}
And now we arrive at the central result of this paper.
\begin{thm}
\label{thm:classical-prob-def}If R1\textendash R\ref{req:last} hold
then
\begin{enumerate}
\item $\Upsilon_{1}$ is an order isomorphism between the posets $\left(\mathbb{Q}_{01},\leq\right)$
and $\left(\mathbb{P},\leq_{\mathbb{P}}\right)$;
\item for all finite $S\subseteq\mathcal{S}$, $A\in\Phi(S)$, and $X\in\Phi^{+}(S)$
we have 
\[
P\left(A\mid X\right)=\frac{\#_{S}\left(A\wedge X\right)}{\#_{S}(X)}.
\]

where $P=\Upsilon_{1}^{-1}$.

\end{enumerate}
\end{thm}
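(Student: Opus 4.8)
The plan is to treat Theorem~\ref{thm:classical-prob-def} as essentially a repackaging of Corollary~\ref{cor:sample-space-upsilon} (which already expresses $A\mid X$ as $\Upsilon_1$ of a rational) together with Lemma~\ref{lem:upsilon-is-strictly-increasing} (which says $\Upsilon_1$ is injective and strictly order-preserving). The only genuinely new work is to establish \emph{surjectivity} of $\Upsilon_1$ onto $\mathbb{P}$ and to upgrade the strict order equivalence of the lemma to a non-strict one. First I would record that $P=\Upsilon_1^{-1}$ makes sense: by Lemma~\ref{lem:upsilon-is-strictly-increasing}, $\Upsilon_1(r)=\Upsilon_1(r')$ iff $r=r'$, so $\Upsilon_1$ is injective on $\mathbb{Q}_{01}$, and by definition $\Upsilon_1(m/n)=\Upsilon_2(m,n)=\left(s_1\vee\cdots\vee s_m\mid\left\langle s_1,\ldots,s_n\right\rangle\right)$, a plausibility value with satisfiable premise, so the range of $\Upsilon_1$ lies in $\mathbb{P}$.

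For part~1, it remains to show $\Upsilon_1$ is onto $\mathbb{P}$ and order-preserving in both directions. For surjectivity, I would take an arbitrary $p\in\mathbb{P}$; by the definition of $\mathbb{P}$ we may write $p=A\mid X$ with $A,X\in\Phi(\mathcal{S})$ and $X$ satisfiable. Since formulas are finite objects, $S\triangleq\sigma\llbracket A,X\rrbracket$ is a \emph{finite} subset of $\mathcal{S}$ with $A\in\Phi(S)$ and $X\in\Phi^+(S)$, so Corollary~\ref{cor:sample-space-upsilon} applies and yields $p=\Upsilon_1\!\left(\#_S(A\wedge X)/\#_S(X)\right)$ with $\#_S(A\wedge X)/\#_S(X)\in\mathbb{Q}_{01}$; hence $p$ is in the range of $\Upsilon_1$. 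Together with injectivity this makes $\Upsilon_1$ a bijection $\mathbb{Q}_{01}\to\mathbb{P}$. For the order, Lemma~\ref{lem:upsilon-is-strictly-increasing} gives $r<r'\Leftrightarrow\Upsilon_1(r)<_{\mathbb{P}}\Upsilon_1(r')$; since $\leq_{\mathbb{P}}$ is a partial order, rewriting $r\leq r'$ as $(r<r')\vee(r=r')$ and $\Upsilon_1(r)\leq_{\mathbb{P}}\Upsilon_1(r')$ as $(\Upsilon_1(r)<_{\mathbb{P}}\Upsilon_1(r'))\vee(\Upsilon_1(r)=\Upsilon_1(r'))$, and using injectivity to handle the equality disjuncts, gives $r\leq r'\Leftrightarrow\Upsilon_1(r)\leq_{\mathbb{P}}\Upsilon_1(r')$. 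Thus $\Upsilon_1$ is an order isomorphism of the two posets; as a by-product this also shows $\leq_{\mathbb{P}}$ restricted to $\mathbb{P}$ is total.

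For part~2, given finite $S$, $A\in\Phi(S)$, and $X\in\Phi^+(S)$, Corollary~\ref{cor:sample-space-upsilon} gives $A\mid X=\Upsilon_1\!\left(\#_S(A\wedge X)/\#_S(X)\right)$, and applying $P=\Upsilon_1^{-1}$ to both sides yields $P(A\mid X)=\#_S(A\wedge X)/\#_S(X)$, which is the desired formula.

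I do not expect a real obstacle here: all the substantive effort has already been spent in Lemma~\ref{lem:sample-space} and its corollaries, in Lemma~\ref{lem:rescaling-upsilon}, and in Lemma~\ref{lem:upsilon-is-strictly-increasing}. The two points needing a moment's care are (a) noticing that every element of $\mathbb{P}$ arises from a query-premise pair over a \emph{finite} set of symbols, which is what makes Corollary~\ref{cor:sample-space-upsilon} applicable and thereby delivers surjectivity of $\Upsilon_1$, and (b) deriving the non-strict order equivalence from the strict one together with injectivity of $\Upsilon_1$; everything else is bookkeeping.
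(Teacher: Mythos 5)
Your proposal is correct and follows essentially the same route as the paper: surjectivity of $\Upsilon_{1}$ onto $\mathbb{P}$ from Corollary \ref{cor:sample-space-upsilon} (via the observation that any $A\mid X$ involves only finitely many symbols), injectivity and order-preservation from Lemma \ref{lem:upsilon-is-strictly-increasing}, and part 2 as a restatement of the corollary under $P=\Upsilon_{1}^{-1}$. You merely spell out more explicitly two points the paper leaves implicit (why every element of $\mathbb{P}$ falls under the corollary, and the passage from the strict to the non-strict order equivalence), which is fine.
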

\begin{proof}
By Corollary \ref{cor:sample-space-upsilon}, $\Upsilon_{1}\colon\mathbb{Q}_{01}\rightarrow\mathbb{P}$
is onto, and by Lemma \ref{lem:upsilon-is-strictly-increasing}, $\Upsilon_{1}$
is a strictly increasing function (hence also one-to-one). So $\Upsilon_{1}$
is an order-preserving bijection between $\mathbb{Q}_{01}$ and $\mathbb{P}$,
that is, it is an order isomorphism.

Since $\Upsilon_{1}$ is a bijection, its inverse $P$ exists. The
second claim is then just a restatement of Corollary \ref{cor:sample-space-upsilon}.
\end{proof}
The laws of probability follow directly from Theorem \ref{thm:classical-prob-def}.
In stating them it is convenient to extend the plausibility function
to unsatisfiable premises using the convention that $A\mid X=\Upsilon_{1}(1)$,
and hence $P\left(A\mid X\right)=1$, when $X$ is unsatisfiable.
This may be justified by noting that for satisfiable $X$ we have
$P\left(A\mid X\right)=1$ whenever $X\models A$, and when $X$ is
unsatisfiable we have $X\models A$ for \emph{all} formulas $A$.
\begin{cor}
\label{cor:laws-of-prob}If R1\textendash R\ref{req:last} hold and
we define $A\mid X=\Upsilon_{1}(1)$ for unsatisfiable $X$, then

\begin{enumerate}
\item \label{enu:prob-law-a}$0\leq P\left(A\mid X\right)\leq1$.
\item \label{enu:prob-known-true}$P\left(A\mid X\right)=1$ if $X\models A$.
\item $P\left(A\mid X\right)=0$ if $X\models\neg A$ and $X$ is satisfiable.
\item \label{enu:prob-law-b}$P\left(\neg A\mid X\right)=1-P\left(A\mid X\right)$
if $X$ is satisfiable.
\item \label{enu:product-rule}$P\left(A\wedge B\mid X\right)=P\left(B\mid X\right)\cdot P\left(A\mid B\wedge X\right)$.
\end{enumerate}
\end{cor}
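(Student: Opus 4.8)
The plan is to derive each of the five probability laws directly from Theorem~\ref{thm:classical-prob-def} and its corollaries, using only elementary counting arguments about truth assignments together with the convention $P(A\mid X)=1$ for unsatisfiable $X$. Throughout I would fix a finite $S\subseteq\mathcal{S}$ large enough to contain all symbols appearing in the formulas under consideration (for the product rule, $S\supseteq\sigma\llbracket A,B,X\rrbracket$), so that every plausibility in sight equals $\Upsilon_1$ of a ratio of counts $\#_S(\cdot)$.

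Items (1)--(3) are immediate. For (1), by Theorem~\ref{thm:classical-prob-def}(2), $P(A\mid X)=\#_S(A\wedge X)/\#_S(X)$ is a ratio of a nonnegative integer to a positive integer, with numerator no larger than denominator since every truth assignment satisfying $A\wedge X$ satisfies $X$; for unsatisfiable $X$ the value is $1$ by convention. For (2), if $X\models A$ then $A\wedge X\equiv X$, so the two counts agree and the ratio is $1$ (and the unsatisfiable case is handled by the convention, which is exactly the remark preceding the corollary); for (3), if $X\models\neg A$ with $X$ satisfiable then no truth assignment satisfies $A\wedge X$, so $\#_S(A\wedge X)=0$ and the ratio is $0$. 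For (4), with $X$ satisfiable, observe that each truth assignment on $S$ satisfying $X$ satisfies exactly one of $A$, $\neg A$, so $\#_S(A\wedge X)+\#_S(\neg A\wedge X)=\#_S(X)$; dividing by $\#_S(X)$ gives $P(A\mid X)+P(\neg A\mid X)=1$.

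The product rule (5) is the one requiring a small case analysis rather than a one-line count. If $X$ is unsatisfiable, then $B\wedge X$ and $A\wedge B\wedge X$ are also unsatisfiable, so both sides equal $1\cdot1=1$ by the convention. If $X$ is satisfiable but $B\wedge X$ is unsatisfiable, then $A\wedge B\wedge X$ is unsatisfiable too, so the left side is $P(A\wedge B\mid X)=0$ by item (3) (since $X\models\neg(A\wedge B)$), while the right side is $P(B\mid X)\cdot P(A\mid B\wedge X)=0\cdot 1=0$. In the remaining case, both $X$ and $B\wedge X$ are satisfiable, and the identity is the counting fact
\[
\frac{\#_S(A\wedge B\wedge X)}{\#_S(X)}=\frac{\#_S(B\wedge X)}{\#_S(X)}\cdot\frac{\#_S(A\wedge B\wedge X)}{\#_S(B\wedge X)},
\]
which holds by cancellation since $\#_S(X)>0$ and $\#_S(B\wedge X)>0$, together with the observation that $\#_S(A\wedge(B\wedge X))$ is the numerator of $P(A\mid B\wedge X)$ computed with the same ambient symbol set $S\supseteq\sigma\llbracket A,B\wedge X\rrbracket$. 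Applying Theorem~\ref{thm:classical-prob-def}(2) to each of the three ratios finishes it.

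The main obstacle is purely bookkeeping: making sure the convention for unsatisfiable premises is invoked consistently across the degenerate cases of the product rule, and making sure a single ambient $S$ is used for all three plausibilities there so that the counts literally cancel (this is legitimate because, by Corollary~\ref{cor:sample-space-upsilon}, $A\mid X$ does not depend on which finite $S\supseteq\sigma\llbracket A,X\rrbracket$ one uses). No functional equations or order-theoretic arguments are needed beyond what Theorem~\ref{thm:classical-prob-def} already provides.
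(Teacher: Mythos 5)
Your proposal is correct and follows essentially the same route as the paper: read off each law from the counting formula $P(A\mid X)=\#_{S}(A\wedge X)/\#_{S}(X)$ of Theorem \ref{thm:classical-prob-def}, with the same three-case analysis of the product rule to handle unsatisfiable premises (reducing to $1=1\cdot1$, $0=0\cdot1$, and the cancellation $m/n=(p/n)\cdot(m/p)$). The only difference is that you spell out items (1)--(4), which the paper dismisses as trivial, and explicitly justify using a single ambient symbol set $S$; both points are fine and consistent with the paper's argument.
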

\begin{proof}
(\ref{enu:prob-law-a})\textendash (\ref{enu:prob-law-b}) are trivial,
but (\ref{enu:product-rule}) merits comment because care must be
taken in handling unsatisfiable premises. There are three cases: 

\begin{enumerate}
\item If $X$ is unsatisfiable then so is $B\wedge X$, and the claim reduces
to $1=1\cdot1$.
\item If $X$ is satisfiable but $B\wedge X$ is not then $X$ logically
implies both $\neg B$ and $\neg\left(A\wedge B\right)$, so $P\left(B\mid X\right)=0$
and $P\left(A\wedge B\mid X\right)=0$ and $P\left(A\mid B\wedge X\right)=1$,
and the claim reduces to $0=0\cdot1$.
\item If $X$ and $B\wedge X$ are both satisfiable, let $S=\sigma\left\llbracket A,B,X\right\rrbracket $,
$n=\#_{S}(X)>0$, $p=\#_{S}\left(B\wedge X\right)>0$, and $m=\#_{S}\left(A\wedge B\wedge X\right)$;
then 
\[
P\left(A\wedge B\mid X\right)=\frac{m}{n}=\frac{p}{n}\cdot\frac{m}{p}=P\left(B\mid X\right)\cdot P\left(A\mid B\wedge X\right).
\]
\end{enumerate}
\end{proof}
Theorem \ref{thm:classical-prob-def} and Corollary \ref{cor:laws-of-prob}
tell us that plausibilities are essentially just probabilities, following
the classical definition of probability as the ratio of favorable
cases to all cases. That is, our four Requirements, based entirely
on preserving existing properties of CPL, lead us to identify finite-set
probability theory as the \emph{uniquely determined} extension of
CPL to a logic of plausible reasoning.

\section{Consistency of Requirements\label{sec:Consistency-of-Requirements}}

An issue that must be addressed for any axiomatic development is whether
its content is vacuous by virtue of there not existing \emph{any}
mathematical structure satisfying the given axioms. If our Requirements
are inconsistent\textemdash if there does not exist any plausibility
function $\left(\cdot\mid\cdot\right)$ for which the Requirements
all hold\textemdash then Theorem \ref{thm:classical-prob-def} is
trivially true, and our entire exercise is pointless. We now show
that this is not the case, by exhibiting a specific plausibility function
that satisfies all the Requirements.

Theorem \ref{thm:classical-prob-def} provides an obvious candidate
for this plausibility function. However, that theorem (and the results
leading up to it) cannot help in \emph{proving} that the Requirements
can be satisfied, as they are consequences of \emph{assuming} that
one already has some plausibility function satisfying the Requirements.
\begin{thm}
R1\textendash R\ref{req:last} are consistent. In particular, suppose
that for any formula $A$ and satisfiable formula $X$ we define 
\[
A\mid X=\frac{\#_{T}\left(A\wedge X\right)}{\#_{T}\left(X\right)},
\]
where $T=\sigma\left\llbracket A,X\right\rrbracket $; then R1\textendash R\ref{req:last}
all hold.
\end{thm}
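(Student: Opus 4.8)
The plan is to check each Requirement directly against the defining formula. The one piece of machinery I would set up first is a \emph{scaling lemma}: for every formula $A$, every satisfiable formula $X$, and every finite $S$ with $\sigma\llbracket A,X\rrbracket\subseteq S\subseteq\mathcal S$, one has $A\mid X=\#_S(A\wedge X)/\#_S(X)$. Indeed, adjoining a symbol $t\notin S$ to the ambient set replaces each satisfying truth assignment on $S$ by exactly two satisfying truth assignments on $S\cup\{t\}$, so both $\#_S(X)$ and $\#_S(A\wedge X)$ get multiplied by $2$ and their ratio is unchanged; starting from $S=\sigma\llbracket A,X\rrbracket$, where the ratio is by definition $A\mid X$, induction on $|S|$ gives the claim (note $\#_S(X)>0$ since $X$ is satisfiable). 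The point of the lemma is that the two sides of any Requirement can be transported to a common finite symbol set before counting.

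For R\ref{req:ordering} I would also pin down the partial order. The defining formula makes every value $A\mid X$ a ratio of a nonnegative integer to a positive integer with numerator at most denominator, so $\mathbb P\subseteq\mathbb Q\cap[0,1]$; take $\leq_{\mathbb P}$ to be the restriction to $\mathbb P$ of the usual order on $\mathbb Q\cap[0,1]$ (a total, hence a fortiori partial, order). Then R\ref{req:equivalence} and R\ref{req:definitions} are short counting arguments over the finite set $S$ of symbols occurring in the formulas involved. For R\ref{req:equivalence}, take $S=\sigma\llbracket A,B,X,Y\rrbracket$: $X\equiv Y$ makes the $S$-assignments satisfying $X$ and those satisfying $Y$ coincide, so $\#_S(X)=\#_S(Y)$, while $A\pleqv XB$ together with $X\equiv Y$ makes the $S$-assignments satisfying $A\wedge X$ and those satisfying $B\wedge Y$ coincide, so $\#_S(A\wedge X)=\#_S(B\wedge Y)$; the scaling lemma then gives $A\mid X=B\mid Y$. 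For R\ref{req:definitions}, with $T=\sigma\llbracket A,X,E\rrbracket$ and $S=T\cup\{s\}$, the map sending a $T$-assignment $\rho$ with $\rho\llbracket X\rrbracket=1$ to the $S$-assignment $\rho'$ agreeing with $\rho$ on $T$ and with $\rho'(s)=\rho\llbracket E\rrbracket$ is a bijection onto the $S$-assignments satisfying $(s\leftrightarrow E)\wedge X$, and it preserves the truth value of $A$ since $s\notin\sigma\llbracket A\rrbracket$; hence $\#_S((s\leftrightarrow E)\wedge X)=\#_T(X)$ and $\#_S(A\wedge(s\leftrightarrow E)\wedge X)=\#_T(A\wedge X)$, and the scaling lemma equates $A\mid X$ with $A\mid(s\leftrightarrow E)\wedge X$.

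For R\ref{req:irrelevance}, with $S_1=\sigma\llbracket A,X\rrbracket$, $S_2=\sigma\llbracket Y\rrbracket$ disjoint and $S=S_1\cup S_2$, an $S$-assignment satisfies $Y\wedge X$ iff its restriction to $S_1$ satisfies $X$ and its restriction to $S_2$ satisfies $Y$ (and similarly for $A\wedge Y\wedge X$), so $\#_S(Y\wedge X)=\#_{S_1}(X)\cdot\#_{S_2}(Y)$ and $\#_S(A\wedge Y\wedge X)=\#_{S_1}(A\wedge X)\cdot\#_{S_2}(Y)$; since $Y$ is satisfiable the common factor $\#_{S_2}(Y)>0$ cancels, and the scaling lemma gives $A\mid X=A\mid Y\wedge X$. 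Finally, for R\ref{req:ordering}, with $S=\sigma\llbracket A,B,X\rrbracket$ the relation $A\pllt XB$ says, using the freedom noted in Section \ref{sec:Logical-Preliminaries} to test logical consequence over any sufficiently large finite symbol set, that every $S$-assignment satisfying $A\wedge X$ satisfies $B\wedge X$ while some $S$-assignment satisfying $B\wedge X$ fails $A$; hence $\#_S(A\wedge X)<\#_S(B\wedge X)$, and dividing by $\#_S(X)>0$ and applying the scaling lemma yields $A\mid X<B\mid X$, i.e.\ $A\mid X<_{\mathbb P}B\mid X$.

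I expect the only step requiring genuine care to be the scaling lemma together with the attendant bookkeeping of symbol sets: the candidate function is defined relative to $\sigma\llbracket A,X\rrbracket$, whereas each Requirement compares it at formula pairs built from differing symbols, so one must first move both sides onto a common finite $S$ before counting. Once that is done there is no real obstacle; in particular, in contrast with the uniqueness direction, no functional equation arises here.
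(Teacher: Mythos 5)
Your proposal is correct and follows essentially the same route as the paper's own proof: establish that the defining ratio is unchanged when the counting set $\sigma\llbracket A,X\rrbracket$ is enlarged to any finite superset, then verify R\ref{req:equivalence}\textendash R\ref{req:ordering} by counting truth assignments over a common finite symbol set (with the bijection argument for R\ref{req:definitions}, the product decomposition for R\ref{req:irrelevance}, and the usual order on rationals for R\ref{req:ordering}). The only differences are cosmetic: you prove the scaling step by induction on single added symbols where the paper just cites the factor $2^{k}$, and you take $\leq_{\mathbb{P}}$ as the restriction of the rational order rather than identifying $\left(\mathbb{P},\leq_{\mathbb{P}}\right)$ with $\left(\mathbb{Q}_{01},\leq\right)$ outright.
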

\begin{proof}
Consider any finite set of symbols $S\supseteq T$. If $S$ contains
$k$ additional symbols beyond those in $T$ then $\#_{S}(X)=2^{k}\#_{T}(X)$
and $\#_{S}\left(A\wedge X\right)=2^{k}\#_{T}\left(A\wedge X\right)$,
hence 
\[
A\mid X=\frac{\#_{S}\left(A\wedge X\right)}{\#_{S}\left(X\right)}.
\]
Thus we may use any superset of the symbols appearing in $A$ and
$X$ when evaluating $A\mid X$. 

We now consider each of the Requirements in turn.

\emph{R\ref{req:equivalence}}. Let $X\equiv Y$ and $A\pleqv XB$
and $S=\sigma\left\llbracket A,B,X,Y\right\rrbracket $. Then $\#_{S}(X)=\#_{S}(Y)$
and $\#_{S}\left(A\wedge X\right)=\#_{S}\left(B\wedge X\right)=\#_{S}\left(B\wedge Y\right)$,
hence $A\mid X=B\mid Y$.

\emph{R\ref{req:definitions}}. Let $Y$ be $\left(s\leftrightarrow E\right)\wedge X$,
where $s$ is a propositional symbol not in $S=\sigma\left\llbracket A,X,E\right\rrbracket $.
Let $S'=S\cup\{s\}$. Each truth assignment on $S$ satisfying $X$
can be extended to a truth assignment on $S'$ satisfying $Y$ in
exactly one way, therefore $\#_{S'}(Y)=\#_{S}(X)$. Likewise, $\#_{S'}\left(A\wedge Y\right)=\#_{S}\left(A\wedge X\right)$.
Hence $A\mid Y=A\mid X$.

\emph{R\ref{req:irrelevance}}. Let $S=\sigma\left\llbracket A,X\right\rrbracket $,
$S'=\sigma\left\llbracket Y\right\rrbracket $, and $T=S\cup S'$.
Since $S$ and $S'$ are disjoint, we have 
\begin{align*}
\#_{T}\left(Y\wedge X\right) & =\#_{S'}(Y)\#_{S}(X)\\
\#_{T}\left(A\wedge Y\wedge X\right) & =\#_{S'}(Y)\#_{S}\left(A\wedge X\right)
\end{align*}
and hence $A\mid X=A\mid Y\wedge X$.

\emph{R\ref{req:ordering}}. Choose $\left(\mathbb{P},\leq_{\mathbb{P}}\right)$
to be $\left(\mathbb{Q}_{01},\leq\right)$. Suppose that $X$ is satisfiable
and let $S=\sigma\left\llbracket A,B,X\right\rrbracket $. If $A\pllt XB$
then all truth assignments satisfying both $A$ and $X$ also satisfy
$B$, and there is some truth assignment satisfying both $B$ and
$X$ that does \emph{not} satisfy $A$. Hence $\#_{S}\left(A\wedge X\right)<\#_{S}\left(B\wedge X\right)$,
yielding $A\mid X<B\mid X$.
\end{proof}

\section{Discussion\label{sec:Discussion}}

\subsection{The classical definition of probability}

The classical definition of probability goes back to Cardano in the
mid 16th Century \cite[Chapter 14]{cardano}; perhaps its clearest
statement was given by Laplace \cite{laplace}:
\begin{quote}
The probability of an event is the ratio of the number of cases favorable
to it, to the number of possible cases, when there is nothing to make
us believe that one case should occur rather than any other, so that
these cases are, for us, equally possible.
\end{quote}
This definition fell out of favor with the rise of both frequentist
and subjective interpretations of probability. Theorem \ref{thm:classical-prob-def}
takes us back to the beginnings of probability theory, validating
the classical definition and sharpening it. We can now say that a
``possible case'' is simply a truth assignment satisfying the premise
$X$. The phrase ``these cases are, for us, equally possible,''
which arguably makes the definition circular, may simply be dropped
as unnecessary. The phrase ``there is nothing to make us believe
that one case should occur rather than any other'' means that we
possess no additional information that, if conjoined with our premise,
would expand the satisfying truth assignments by differing multiplicities. 

We shall illustrate this subtle but important point with Bertrand's
``Box Paradox'' \cite{bertrand-box}. There are three identical
boxes in a row, each with two drawers. One of the boxes, call it GG,
has gold coins in both drawers; one box, call it SS, has silver coins
in both drawers; and the remaining box, call it GS, has a gold coin
in one drawer and a silver coin in the other. Not knowing which is
which, you open the first drawer of the second box, and observe that
it contains a gold coin; what is the probability that the other drawer
also holds a gold coin?

\begin{table}

\begin{centering}
\begin{tabular}{|c|c||c|c||c|c|}
\hline 
D11 & D12 & D21 & D22 & D31 & D32\tabularnewline
\hline 
\hline 
G & G & G & S & S & S\tabularnewline
\hline 
S & S & G & S & G & G\tabularnewline
\hline 
G & S & G & G & S & S\tabularnewline
\hline 
S & G & G & G & S & S\tabularnewline
\hline 
S & S & G & G & G & S\tabularnewline
\hline 
S & S & G & G & S & G\tabularnewline
\hline 
\end{tabular}
\par\end{centering}
\caption{\label{tab:bertrand-box}Tabulation of all possible cases in Bertrand's
``Box Paradox.'' D\emph{ij} means drawer $j$ of box $i$.}
\end{table}

This problem is often resolved by appeal to Bayes' Rule, yielding
a probability of 2/3. Let's apply Theorem \ref{thm:classical-prob-def}
instead. A naïve analysis, using only information about the second
box itself, gives a probability of 1/2: the second box must be either
GG or GS (two cases), and since the first drawer contains a gold coin,
the second drawer also contains a gold coin only if the second box
is GG (one case). But this ignores the (seemingly irrelevant) information
we have about the first and third boxes. Table \ref{tab:bertrand-box}
gives an exhaustive list of all possible cases when the other boxes
are included. We see that the case ``second box is GS'' gets expanded
into two cases, while the case ``second box is GG'' gets expanded
into \emph{four} cases, thereby invalidating the naïve analysis. Using
the expanded table gives the correct answer of 2/3.

\subsection{Uniform versus non-uniform probabilities\label{subsec:Uniform-Versus-Non-uniform}}

One concern about Theorem \ref{thm:classical-prob-def} may be that
it \emph{mandates} the uniform distribution on $\Omega$, the induced
sample space of truth assignments satisfying the premise $X$. But
what other reasonable option is there? Remember that the premise $X$
contains \emph{all} the information to which we have access in determining
our probability distribution. There is no implicit third argument
to the plausibility function that varies from one problem domain to
another. $\Omega$ is just the reification of $X$ as a set\textemdash $X$
tells us that \emph{one} of the elements of $\Omega$ is the correct
description of the situation, and that is \emph{all} it tells us.
It gives us no information by which we could favor one of these possibilities
over another.

Yet non-uniform distributions are the norm in practical applications
of probability theory, and one may ask where they come from. The ``Box
Paradox'' example illustrates one answer: via marginalization. A
uniform distribution at the finest level of granularity can correspond
to a \emph{nonuniform} distribution at coarser levels obtained by
considering the induced sample space for some\emph{ subset} of the
symbols in $\sigma\left\llbracket A,X\right\rrbracket $.

For a more complete answer, let's consider Carnap's objection to the
uniform distribution. He defines a confirmation function $c^{\dagger}$
for $\mathfrak{L}_{N}$ based on a uniform distribution over state-descriptions,
and notes that if $a_{1},\ldots,a_{k},a_{k+1}$ are distinct individual
constants and $\pi$ is a monadic predicate (property), then 
\[
c^{\dagger}\left(\pi\left(a_{k+1}\right),\,\pi\left(a_{1}\right)\wedge\cdots\wedge\pi\left(a_{k}\right)\right)=\frac{1}{2}
\]
for any $k<N$. He concludes \cite[p. 565]{carnap},
\begin{quote}
Thus the choice of $c^{\dagger}$ as the degree of confirmation would
be tantamount to the principle never to let our past experiences influence
our expectations for the future.
\end{quote}
Yet Carnap encounters this problem with $c^{\dagger}$ for precisely
the same reason that he cannot find a uniquely determined confirmation
function. As Jaynes writes \cite[p. 279]{ptlos},
\begin{quote}
Carnap was seeking the general inductive rule (i.e., the rule by which,
given the record of past results, one can make the best possible prediction
of future ones). But\ldots{} he never rises to the level of seeing
that different inductive rules correspond to \emph{different prior
information}. It seems to us obvious\ldots{} that this is the primary
fact controlling induction, without which the problem cannot even
be stated, much less solved; there is no `general inductive rule.'
Yet neither the term `prior information' nor the concept ever appears
in Carnap's exposition.
\end{quote}
This prior information belongs in the premise, and Carnap chooses
not to include it there, as discussed in Section \ref{sec:Carnap-logical-probability}
and Section \ref{sec:The-Plausibility-Function}.

As an example of such prior information, consider Carnap's proposed
confirmation function $c^{*}$ and associated measure function $m^{*}$,
for a language having a single monadic predicate $\pi$. Let us write
$x_{i}$ for the atomic sentence $\pi\left(a_{i}\right)$, where $a_{i}$
is the $i$-th individual constant. Then $m^{*}$ is equivalent to
defining the joint distribution 
\begin{align*}
\theta & \sim\mathrm{Uniform}(0,1)\\
x_{i} & \sim\mathrm{Bernoulli}(\theta)\quad\mbox{independently for all }i
\end{align*}
and marginalizing out $\theta$. That is, give $\theta$ a uniform
distribution over the interval $(0,1)$, then independently give each
$x_{i}$ a probability $\theta$ of being true.

We now construct a propositional formula that expresses an arbitarily
close approximation of this prior information. Let $I$ and $K$ be
large, positive integers. Consider the $x_{i}$, $1\leq i\leq I$,
as propositional symbols. Let $h_{k}$, $0\leq k\leq K$, have the
intended interpretation ``$\theta=k/K."$ Let us imagine that individual
$i$ may be in any of $K$ distinct fine-grained states, and let $s_{ij}$,
$1\leq j\leq K$, have the intended interpretation that individual
$i$ is in state $j$. Finally, define $X$ to be the conjunction
of the following $(K^{2}+K+1)I+1$ formulas: 
\begin{flalign*}
\left\langle h_{0},\ldots,h_{K}\right\rangle \\
\left\langle s_{i1},\ldots,s_{iK}\right\rangle  & \mbox{ for }1\leq i\leq I\\
h_{k}\wedge s_{ij}\rightarrow l_{ijk} & \mbox{ for }1\leq i\leq I\mbox{, }1\leq j\leq K\mbox{, }0\leq k\leq K
\end{flalign*}
where 
\begin{align*}
l_{ijk} & =\begin{cases}
x_{i} & \mbox{if }j\leq k\\
\neg x_{i} & \mbox{if }j>k.
\end{cases}
\end{align*}
That is, exactly one of the $h_{k}$ is true; for each $i$, exactly
one of the $s_{ij}$ is true; and if $h_{k}$ is true then each $x_{i}$
is true in $k$ out of the $K$ possible states for individual $i$.
Using Theorem \ref{thm:classical-prob-def} we then have 
\begin{align*}
P\left(x_{i}\mid h_{k}\wedge X\right) & =k/K\quad\mbox{for all }i\mbox{ independently}\\
P\left(h_{k}\mid X\right) & =1/\left(k+1\right).
\end{align*}
This illustrates the general lesson: non-uniform probabilities arise
by introducing latent variables, including in the premise information
that links the latent variables to observables, and then marginalizing
out latent variables.

\subsection{Infinite domains\label{subsec:Infinite-Domains}}

How might one extend these results to infinite domains, which are
required for the bulk of practical applications of probability theory?
Jaynes proposes a \emph{finite sets policy} \cite[p. 43]{ptlos}:
\begin{quote}
It is very important to note that our consistency theorems have been
established only for probabilities assigned on \emph{finite sets}
of propositions. In principle, every problem must start with such
finite-set probabilities; extension to infinite sets is permitted
only when this is the result of a well-defined and well-behaved limiting
process from a finite set.
\end{quote}
In the same vein, he writes \cite[p. 663]{ptlos},
\begin{quote}
In probability theory, it appears that the only safe procedure known
at present is to derive our results first by strict application of
the rules of probability theory on finite sets of propositions; then,
after the finite-set result is before us, observe how it behaves as
the number of propositions increases indefinitely.
\end{quote}
As an example, consider $P\left(y<\left(1-x\right)^{2}\mid x,y\in[0,1)\wedge y<x^{2}\right)$.
We can consider this to be the limiting value of $P\left(A_{n}\mid X_{n}\right)$
as $n\rightarrow\infty$, where the queries $A_{n}$ and premises
$X_{n}$ are defined as follows:
\begin{enumerate}
\item Symbols $a_{i}$ and $b_{i}$, for $1\leq i\leq n$, are intended
to mean $i-1\leq nx<i$ and $i-1\leq ny<i$ respectively. 
\item Let $A_{n}$ be $\bigvee_{(i,j)\in K}\left(a_{i}\wedge b_{j}\right)$
where $K=\left\{ (i,j)\colon j/n\leq\left(1-i/n\right)^{2}\right\} $. 
\item Let $X_{n}$ be $\bigvee_{(i,j)\in L}\left(a_{i}\wedge b_{j}\right)$
where $L=\left\{ (i,j)\colon j/n\leq\left(i/n\right)^{2}\right\} $.
\end{enumerate}
Figure \ref{fig:region-under-curve} illustrates $X_{n}\wedge A_{n}$
in black and $X_{n}\wedge\neg A_{n}$ in gray for $n=30$. As $n\rightarrow\infty$,
$A_{n}$ tends in the limit to the desired query ``$y<\left(1-x\right)^{2}$''
and $X_{n}$ tends in the limit to the desired premise ``$y<x^{2}$,''
with $x,y\in[0,1)$ implicit in the problem encoding. 

\begin{figure}
\begin{centering}
\includegraphics[scale=0.5]{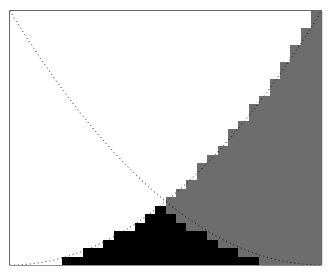}
\par\end{centering}
\caption{\label{fig:region-under-curve}Approximating $P\left(y<\left(1-x\right)^{2}\mid y<x^{2}\right)$
with $n=30$}
\end{figure}

Though straightforward, it would be tedious to have to explicitly
construct the limiting process and find the limiting value every time
we considered a probability involving an infinite domain. Modern probability
theory is based on measure theory, so it should be no surprise that
measure theory provides the tools to automate this process of constructing
a sequence of finite approximations that converge to a limit. We will
not attempt to provide a full account of this large topic. By way
of illustration, however, we will discuss one particularly simple
case. We only sketch things out here; see Appendix \ref{sec:Measure-Theory}
for more details.

Consider the Cantor set of infinite binary sequences $\mathbb{B}^{\omega}$,
where $\mathbb{B}=\left\{ 0,1\right\} $. Enumerating the elements
of $\mathcal{S}$ as $s_{1},s_{2},\ldots$, we can consider $\mathbb{B}^{\omega}$
to be the set of truth assignments on $\mathcal{S}$ if we identify
a truth assignment $\rho$ with the infinite sequence $w$ such that
$w_{i}=\rho\left(s_{i}\right)$ for all $i$. We write $\mathcal{C}$
and $\mu_{\mathcal{C}}$ for the Borel $\sigma$-algebra on $\mathbb{B}^{\omega}$
and Borel measure on $\mathcal{C}$ respectively, which may be considered
a uniform distribution over $\mathbb{B}^{\omega}$. Defining 
\[
[A]=\left\{ w\in\mathbb{B}^{\omega}\colon w\mbox{ satisfies }A\right\} 
\]
for any $A\in\Phi\left(\mathcal{S}\right)$, and 
\[
\Pr\left(\tilde{A};\tilde{X},\mu\right)=\frac{\mu\left(\tilde{A}\cap\tilde{X}\right)}{\mu\left(\tilde{X}\right)}
\]
for any two measurable sets $\tilde{A},\tilde{X}$ and measure $\mu$
with $\mu\left(\tilde{X}\right)>0$, we find that 
\[
P\left(A\mid X\right)=\Pr\left([A];[X],\mu_{\mathcal{C}}\right).
\]
Finally, Theorem \ref{thm:well-defined-limit} (Appendix \ref{sec:Measure-Theory})
states that for any measurable sets $\tilde{A},\tilde{X}\in\mathcal{C}$
with $\mu_{\mathcal{C}}\left(\tilde{X}\right)>0$ there exists a sequence
of formulas $A_{i}\in\Phi\left(\mathcal{S}\right)$ and $X_{i}\in\Phi^{+}\left(\mathcal{S}\right)$
such that 
\begin{gather*}
\mu_{\mathcal{C}}\left(\left[A_{i}\right]\triangle\tilde{A}\right)\rightarrow0\\
\mu_{\mathcal{C}}\left(\left[X_{i}\right]\triangle\tilde{X}\right)\rightarrow0\\
P\left(A_{i}\mid X_{i}\right)\rightarrow\Pr\left(\tilde{A};\tilde{X},\mu_{\mathcal{C}}\right)
\end{gather*}
as $i\rightarrow\infty$, where $\triangle$ stands for set difference.
That is, Jaynes's ``well-defined and well-behaved limiting process''
is guaranteed to exist under the conditions of the Theorem, and $\Pr\left(\tilde{A};\tilde{X},\mu_{\mathcal{C}}\right)$
is the limiting probability.

Now we turn to the space $\Omega=\mathbb{B}^{m}\times[0,1)^{n}$.
We write $\mathcal{D}$ for the powerset of $\mathbb{B}^{m}$ (the
maximal $\sigma$-algebra on $\mathbb{B}^{m}$) and define $\mu_{\mathcal{D}}(\tilde{A})=\left|\tilde{A}\right|/2^{m}$
for $\tilde{A}\in\mathcal{D}$. We write $\mathcal{B}$ and $\mu_{\mathcal{B}}$
for the Borel $\sigma$-algebra on $[0,1)^{n}$ and Borel measure
on $\mathcal{B}$, respectively. Let $\mathcal{A}=\sigma\left(\mathcal{D}\times\mathcal{B}\right)$
be the product $\sigma$-algebra of $\mathcal{D}$ and $\mathcal{B}$,
and let $\mu_{\mathcal{A}}=\mu_{\mathcal{D}}\times\mu_{\mathrm{\mathcal{B}}}$
be the product measure of $\mu_{\mathcal{D}}$ and $\mu_{\mathcal{B}}$.
The measure $\mu_{\mathcal{A}}$ may be considered a uniform distribution
over $\mathbb{B}^{m}\times[0,1)^{n}$, with $\mu_{\mathcal{A}}\left(\tilde{A}\times\tilde{B}\right)$
being $\left|\tilde{A}\right|/2^{m}$ times the $n$-dimensional hypervolume
of $\tilde{B}$.

The set $\mathbb{B}^{*}1^{\omega}$ of binary sequences ending in
an infinite sequence of 1's is a measurable set of measure 0. Let
us define $\mathbb{I}=\mathbb{B}^{\omega}\setminus\mathbb{B}^{*}1^{\omega}$
to be all infinite binary sequences \emph{except} this measure-0 set.
Define the function $f\colon\mathbb{I}\rightarrow\Omega$ as
\begin{align*}
f\left(w\right) & =\left(f_{0}(w),r\left(f_{1}(w)\right),\ldots,r\left(f_{n}(w)\right)\right)\\
f_{0}(w) & =w_{1}\cdots w_{m}\\
f_{j}(w) & =v_{1}v_{2}\cdots\mbox{ where }v_{i}=w_{m+\iota(i,j)},\mbox{ for }j\neq0\\
r(v) & =\sum_{i=1}^{\infty}2^{-i}v_{i}\\
\iota(i,j) & =j+(i-1)n.
\end{align*}
That is, applying the mapping $f$ amounts to interpreting symbol
$s_{m+\iota(i,j)}$, for $i\geq1$ and $1\leq j\leq n$, as the $i$-th
bit in the infinite binary expansion of $x_{j}\in[0,1)$, or more
precisely, as the proposition ``$\left\lfloor 2^{i}x_{j}\right\rfloor \bmod2=1$.''
We interleave $n$ infinite sequences into one sequence by mapping
index $i$ of sequence $j$ to index $\iota(i,j)$ of the combined
sequence. Using symbols $s_{m+1}$ through $s_{m+\iota(k,n)}$ we
can express any subspace of $[0,1)^{n}$ at a granularity of hypercubes
of length $2^{-k}$ on each side, and we can make this granularity
as fine as desired by choosing $k$ sufficiently large.

The function $f$ is a bijection between $\mathbb{I}$ and $\Omega$.
(We excluded $\mathbb{B}^{*}1^{\omega}$ to ensure this, as dyadic
rationals $m/2^{n}$ have two possible binary expansions.) Furthermore,
both $f$ and $f^{-1}$ are measurable: $f^{-1}(A)\in\mathcal{C}$
and $f^{-1}(A)\subseteq\mathbb{I}$ whenever $A\subseteq\mathcal{A}$,
and $f(B)\in\mathcal{A}$ whenever $B\in\mathcal{C}$ and $B\subseteq\mathbb{I}$.
Finally, $f$ is measure-preserving: $\mu_{\mathcal{C}}\left(f^{-1}(A)\right)=\mu_{\mathcal{A}}\left(A\right)$
whenever $A\in\mathcal{A}$. This guarantees that 
\[
\Pr\left(\tilde{A};\tilde{X},\mu_{A}\right)=\Pr\left(f^{-1}\left(\tilde{A}\right);f^{-1}\left(\tilde{X}\right),\mu_{\mathcal{C}}\right).
\]
Therefore, we can apply Theorem \ref{thm:well-defined-limit} and
find that for any measurable sets $\tilde{A},\tilde{X}\in\mathcal{A}$
with $\mu_{\mathcal{A}}\left(\tilde{X}\right)>0$ there exist sequences
of formulas $A_{i}$ and $X_{i}$, with $X_{i}$ satisfiable, such
that
\begin{gather*}
\mu_{\mathcal{C}}\left(\left[A_{i}\right]\triangle f^{-1}\left(\tilde{A}\right)\right)\rightarrow0\\
\mu_{\mathcal{C}}\left(\left[X_{i}\right]\triangle f^{-1}\left(\tilde{X}\right)\right)\rightarrow0\\
P\left(A_{i}\mid X_{i}\right)\rightarrow\Pr\left(\tilde{A};\tilde{X},\mu_{\mathcal{A}}\right)
\end{gather*}
as $i\rightarrow\infty$.

In the example given at the beginning of this section, we have $m=0$,
$n=2$, and 
\begin{align*}
\tilde{A} & =\left\{ (x,y)\in[0,1)^{2}\colon y<(1-x)^{2}\right\} \\
\tilde{X} & =\left\{ (x,y)\in[0,1)^{2}\colon y<x^{2}\right\} .
\end{align*}
The above results tell us that we don't need to explicitly construct
the sequence of approximating formulas for this example; it is guaranteed
to exist, and the limiting probability is 
\[
\Pr\left(\tilde{A};\tilde{X},\mu_{\mathcal{A}}\right)=\frac{\int_{0}^{1}\min\left(x^{2},\left(1-x\right)^{2}\right)\mathrm{d}x}{\int_{0}^{1}x^{2}\mathrm{d}x}=\frac{1}{4}.
\]

As another example, let us revisit and generalize the inductive model
described in Section \ref{subsec:Uniform-Versus-Non-uniform}:
\begin{align*}
\theta & \sim\mathrm{Distr}\left(F\right)\\
x_{i} & \sim\mathrm{Bernoulli}(\theta)\quad\mbox{independently for all }1\le i\leq I
\end{align*}
where $\mathrm{Distr}(F)$ is the distribution on the unit interval
with cdf $F$, which we take to be continuous (and hence invertible).
Doing a change of variables and augmenting with latent variables $s_{i}$,
the above is equivalent to 
\begin{align*}
p & \sim\mathrm{Uniform}(0,1)\\
\theta & =F^{-1}(p)\\
s_{i} & \sim\mathrm{Uniform}(0,1)\\
x_{i} & =\begin{cases}
1 & \mbox{if }s_{i}<\theta\\
0 & \mbox{otherwise}
\end{cases}
\end{align*}
after marginalizing out $p$ and $s$. We have independent uniform
distributions on $p$ and each $s_{i}$, plus equations relating each
$x_{i}$ to $p$ and $s_{i}$; hence the above is equivalent to using
as premise the measurable set 
\[
\tilde{X}=\left\{ \left(x,p,s\right)\in\mathbb{B}^{I}\times[0,1)^{1+I}\colon x_{i}=1\Leftrightarrow s_{i}<F^{-1}\left(p\right)\mbox{ for all }1\leq i\leq I\right\} ,
\]
again using the $\sigma$-algebra $\mathcal{A}$ and measure $\mu_{\mathcal{A}}$,
with $m=I$ and $n=I+1$. For any measurable $\tilde{A}$ we are again
guaranteed that $\Pr\left(\tilde{A};\tilde{X},\mu_{\mathcal{A}}\right)$
is the limiting probability obtained from a sequence of approximating
formulas $A_{i}$ and $X_{i}$.

This is not a complete solution to handling infinite problem domains.
For instance, in the example above we used $\mathbb{B}^{I}$ (with
finite $I$) instead of $\mathbb{B}^{\omega}$, because $\mu_{\mathcal{A}}\left(\tilde{X}\right)\rightarrow0$
as $I\rightarrow\infty$. In addition, the measure $\mu_{\mathcal{C}}$
on $\mathcal{C}$ and encoding of real numbers used above works well
for a bounded interval like $[0,1)$ but does not suffice for unbounded
intervals, such as all of $\mathbb{R}$. For such cases we need alternative
measures on $\mathcal{C}$ and corresponding analogs to Theorem \ref{thm:well-defined-limit},
along with alternative encodings for these domains. 

Other work that remains to be done on this topic includes the following:
\begin{enumerate}
\item Finding a general method of constructing the needed sequence of approximating
formulas for \emph{any} computable probability measure \cite{edalat-computable-measure-theory,weihrauch-tavana}.
\item Extending our language of propositional formulas to express measurable
sets beyond just the cylinder sets, while ensuring that $P\left(A\mid X\right)$
remains computable, as is appropriate for a logical system.
\end{enumerate}
We have made some initial investigations of these open issues and
believe that they can be resolved.

\section{Conclusion}

We have strengthened the case for probability theory as the uniquely
determined extension of classical propositional logic to a logic of
plausible reasoning. Our proof relies on a small and simple set of
requirements such a logic must satisfy. These requirements are harder
to dispute than those of previous such efforts because every one of
the requirements is motivated by a desire to retain in our extended
logic some property of CPL. A crucial distinction between our approach
and similar previous work is that $A\mid X$ depends \emph{only }on
the explicit arguments $A$ and $X$, and not on any other domain-specific
or problem-specific information; any such relevant information must
be included in the premise $X$. This makes the plausibility function
a legitimate analog of the logical consequence relation: the truth
or falsity of $X\models A$ likewise depends only on $X$ and $A$,
and not on any implicit domain-specific or problem-specific information.

R2 (invariance under definition of new symbols) in conjunction with
R1 (logical equivalence) turns out to have far-reaching implications.
It yields invariance under renaming of propositional symbols and,
in fact, a fully general invariance under change-of-variable transformations.
Most importantly, it implies that $A\mid X$ is a function only of
$\#_{S}\left(A\wedge X\right)$ and $\#_{S}(X)$ for any $S$ containing
all the symbols used in $A$ and $X$. R3 (invariance under addition
of irrelevant information) excludes Carnap's system, which comes pre-supplied
with dependencies between propositions rather than letting any dependencies
be specified in the premise. Adding R3 implies that $A\mid X$ is
a function of the \emph{ratio} of $\#_{S}\left(A\wedge X\right)$
to $\#_{S}(X)$. 

Finally, with R4 we made use of the underappreciated fact that CPL
already comes equipped with an inherent plausibility ordering on propositions,
for any given premise. The invariances of R1\textendash R3 ``stitch
together'' these partial orderings for distinct premises, and we
find that we have an order-preserving isomorphism $P$ between the
set of plausibilities $\mathbb{P}$ and the set of rational probabilities
$\mathbb{Q}_{01}$. The numeric value we find for $P\left(A\mid X\right)$
recreates the classical definition of probability, but in a sharper,
clearer form with the troublesome circularity excised, and as a \emph{theorem}
rather than as a definition. The ``possible cases'' are identified
as truth assignments satisfying the premise, and the meaning of ``equally
possible cases'' is simply that we have no additional information
that would expand the satisfying truth assignments by differing multiplicities.

We addressed two possible concerns about our result: that it seems
to allow only uniform probabilities, and that it yields probabilities
only for finite domains. We showed how non-uniform probabilities arise
via the introduction of latent variables, along with information in
the premise linking these latent variables to the observables. Following
Jaynes, we proposed that probabilities for infinite domains be obtained
via a well-defined and well-behaved limiting process, and demonstrated
how measure theory can automate the construction of such limiting
processes in at least some cases.

\appendix
\renewcommand*{\thesection}{\appendixname~\Alph{section}}

\section{Measure Theory\label{sec:Measure-Theory}}

\subsection{Some results from measure theory}

We assume the reader is already familiar with the basic concepts of
measure theory: an algebra, a $\sigma$-algebra, a measurable set,
the $\sigma$-algebra $\sigma(\mathcal{A})$ generated by an algebra
$\mathcal{A}$, a measurable function, a measure on an algebra or
$\sigma$-algebra, and a $\sigma$-finite measure. Billingsley \cite{billingsley}
and Tao \cite{tao-measure-theory} are good references. Here we highlight
some results we will use.

A measure on an algebra $\mathcal{A}$ can always be consistently
extended to a measure on $\sigma\left(\mathcal{A}\right)$ \cite[Theorem 11.3]{billingsley}:
\begin{thm}
\label{thm:extend-measure}If $\mu$ is a measure on an algebra $\mathcal{A}$
then $\mu$ extends to a measure on $\sigma\left(\mathcal{A}\right)$,
that is, there exists a measure $\mu'$ on $\sigma\left(\mathcal{A}\right)$
such that $\mu(A)=\mu'(A)$ for all $A\in\mathcal{A}$. If $\mu$
is $\sigma$-finite then $\mu'$ is unique, and is also $\sigma$-finite.
\end{thm}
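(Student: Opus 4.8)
The plan is to use the classical Carathéodory construction. First I would extend $\mu$ to an outer measure $\mu^*$ defined on \emph{all} subsets of the underlying space $X$ by
\[
\mu^*(E) = \inf\Bigl\{ \sum_{n=1}^{\infty} \mu(A_n) : E \subseteq \bigcup_{n=1}^{\infty} A_n,\ A_n \in \mathcal{A} \Bigr\},
\]
with the convention that the infimum over an empty family of covers is $+\infty$. It is routine to check that $\mu^*(\emptyset) = 0$, that $\mu^*$ is monotone, and that $\mu^*$ is countably subadditive (split a tolerance $\varepsilon$ into pieces $\varepsilon 2^{-n}$ and concatenate near-optimal covers). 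The first substantive point is that $\mu^*$ agrees with $\mu$ on $\mathcal{A}$: the inequality $\mu^*(A) \le \mu(A)$ is immediate by taking the one-element cover $\{A\}$, while $\mu(A) \le \mu^*(A)$ for $A \in \mathcal{A}$ is where countable additivity of $\mu$ \emph{on the algebra} is used — given a cover $A \subseteq \bigcup_n A_n$, disjointify within $\mathcal{A}$ (legitimate since an algebra is closed under finite unions and differences) and apply countable additivity together with monotonicity of $\mu$ on $\mathcal{A}$.

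Next I would introduce the collection $\mathcal{M}$ of Carathéodory-measurable sets: $E \in \mathcal{M}$ iff $\mu^*(B) = \mu^*(B \cap E) + \mu^*(B \setminus E)$ for every $B \subseteq X$. The heart of the argument — and the step I expect to be the main obstacle — is Carathéodory's lemma: $\mathcal{M}$ is a $\sigma$-algebra, and the restriction $\mu^*|_{\mathcal{M}}$ is a countably additive measure. I would prove this by first showing $\mathcal{M}$ is an algebra (symmetry of the defining identity gives closure under complement; applying the identity twice gives closure under finite unions), then upgrading to a $\sigma$-algebra by showing that for a countable disjoint union $E = \bigsqcup_n E_n$ with each $E_n \in \mathcal{M}$ one has $\sum_n \mu^*(B \cap E_n) \le \mu^*(B \cap E)$, via an induction on the measurability of the partial unions; combined with countable subadditivity this forces $\mu^*$ to be countably additive on $\mathcal{M}$. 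I would then verify $\mathcal{A} \subseteq \mathcal{M}$: for $A \in \mathcal{A}$ and arbitrary $B$, take a near-optimal $\mathcal{A}$-cover of $B$, intersect it with $A$ and with $A^c$, and use finite additivity of $\mu$ on $\mathcal{A}$ to get $\mu^*(B \cap A) + \mu^*(B \setminus A) \le \mu^*(B) + \varepsilon$, the reverse inequality being subadditivity. Since $\mathcal{M}$ is a $\sigma$-algebra containing $\mathcal{A}$, it contains $\sigma(\mathcal{A})$, so $\mu' := \mu^*|_{\sigma(\mathcal{A})}$ is a measure extending $\mu$, which settles existence.

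For the $\sigma$-finite case, $\sigma$-finiteness of $\mu'$ is immediate: a countable cover of $X$ by $\mathcal{A}$-sets of finite $\mu$-measure is also a cover by sets of finite $\mu'$-measure. For uniqueness, suppose $\nu$ is another measure on $\sigma(\mathcal{A})$ with $\nu|_{\mathcal{A}} = \mu$. Fix $A_0 \in \mathcal{A}$ with $\mu(A_0) < \infty$; the family $\{ E \in \sigma(\mathcal{A}) : \mu'(E \cap A_0) = \nu(E \cap A_0) \}$ is a $\lambda$-system containing the $\pi$-system $\mathcal{A}$, so by Dynkin's $\pi$-$\lambda$ theorem it is all of $\sigma(\mathcal{A})$. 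Writing $X$ as a countable disjoint union of $\mathcal{A}$-sets of finite measure (possible by $\sigma$-finiteness after disjointifying) and summing the equality over these pieces gives $\mu'(E) = \nu(E)$ for every $E \in \sigma(\mathcal{A})$. The only genuinely delicate bookkeeping is the induction inside Carathéodory's lemma and the handling of the $+\infty$ cases throughout; everything else is a direct verification.
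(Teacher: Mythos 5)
Your proposal is correct: it is the standard Carath\'eodory construction (outer measure induced by $\mu$, the algebra $\mathcal{M}$ of Carath\'eodory-measurable sets, verification that $\mathcal{A}\subseteq\mathcal{M}$ and restriction to $\sigma(\mathcal{A})$), with uniqueness in the $\sigma$-finite case via Dynkin's $\pi$-$\lambda$ theorem applied on pieces of finite measure, and the steps you flag as delicate are exactly the ones handled in the usual way. Note that the paper itself offers no proof of this statement --- it is quoted as a background result (Theorem 11.3 of Billingsley) --- and the proof in that reference is essentially the argument you outline, so your route coincides with the cited source's.
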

In many cases of interest the elements of a $\sigma$-algebra can
be approximated arbitrarily closely by sets from the generating algebra
\cite[Theorem 11.4]{billingsley}:
\begin{thm}
\label{thm:approximate-measurable-set}If $\mathcal{A}$ is an algebra
on $\Omega$, $\mu$ is a $\sigma$-finite measure on $\sigma\left(\mathcal{A}\right)$,
and $B\in\sigma\left(\mathcal{A}\right)$ with $\mu\left(B\right)<\infty$,
then for every $\epsilon>0$ there exists some $A\in\mathcal{A}$
such that $\mu\left(A\triangle B\right)<\epsilon$.
\end{thm}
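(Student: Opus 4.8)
The plan is to use the good-sets principle. Let $\mathcal{G}$ be the collection of all $B\in\sigma(\mathcal{A})$ such that for every $\epsilon>0$ there is some $A\in\mathcal{A}$ with $\mu(A\triangle B)<\epsilon$. By construction $\mathcal{G}\subseteq\sigma(\mathcal{A})$, and $\mathcal{A}\subseteq\mathcal{G}$ trivially (take $A=B$, so $\mu(A\triangle B)=0$); so it suffices to show that $\mathcal{G}$ is closed under enough $\sigma$-algebra operations to force $\mathcal{G}=\sigma(\mathcal{A})$.

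\emph{Step 1 (finite measure)}. Suppose first $\mu(\Omega)<\infty$. Closure of $\mathcal{G}$ under complementation is immediate from $A^{c}\triangle B^{c}=A\triangle B$ and the fact that $\mathcal{A}$ is an algebra. For countable unions, given $B_{1},B_{2},\dots\in\mathcal{G}$ and $\epsilon>0$, pick $A_{n}\in\mathcal{A}$ with $\mu(A_{n}\triangle B_{n})<\epsilon\,2^{-n-1}$. Since $(\bigcup_{n}A_{n})\triangle(\bigcup_{n}B_{n})\subseteq\bigcup_{n}(A_{n}\triangle B_{n})$, the set $\bigcup_{n}A_{n}$ approximates $\bigcup_{n}B_{n}$ to within $\epsilon/2$; its only defect is that it need not lie in $\mathcal{A}$. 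Here finiteness of $\mu$ rescues us: continuity from below gives an $N$ with $\mu((\bigcup_{n}A_{n})\setminus(\bigcup_{n\le N}A_{n}))<\epsilon/2$, and $\bigcup_{n\le N}A_{n}\in\mathcal{A}$ then approximates $\bigcup_{n}B_{n}$ to within $\epsilon$. Thus $\mathcal{G}$ is a $\sigma$-algebra containing $\mathcal{A}$, so $\mathcal{G}=\sigma(\mathcal{A})$ --- which is the theorem when $\mu$ is finite, the hypothesis $\mu(B)<\infty$ then being automatic.

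\emph{Step 2 (reduction of the $\sigma$-finite case)}. Fix $B\in\sigma(\mathcal{A})$ with $\mu(B)<\infty$ and $\epsilon>0$. Using $\sigma$-finiteness, choose an increasing sequence $\Omega_{1}\subseteq\Omega_{2}\subseteq\cdots$ of finite-measure sets with union $\Omega$; since $\mu(B)<\infty$ and $B\cap\Omega_{k}\uparrow B$, continuity from below gives $k$ with $\mu(B\setminus\Omega_{k})<\epsilon/2$. Now $\mu_{k}(C):=\mu(C\cap\Omega_{k})$ is a \emph{finite} measure on $\sigma(\mathcal{A})$, so Step 1 applied to $\mu_{k}$ yields $A\in\mathcal{A}$ with $\mu((A\triangle B)\cap\Omega_{k})=\mu_{k}(A\triangle B)<\epsilon/2$. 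One then checks $(A\cap\Omega_{k})\triangle B\subseteq((A\triangle B)\cap\Omega_{k})\cup(B\setminus\Omega_{k})$, so $\mu((A\cap\Omega_{k})\triangle B)<\epsilon$, which is the conclusion provided $A\cap\Omega_{k}\in\mathcal{A}$.

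The main obstacle I anticipate is exactly that last proviso: arranging that the truncated approximant $A\cap\Omega_{k}$ again lies in $\mathcal{A}$. This forces the exhausting sequence $\Omega_{1}\subseteq\Omega_{2}\subseteq\cdots$ to be chosen \emph{within} $\mathcal{A}$, not merely within $\sigma(\mathcal{A})$; that $\sigma$-finiteness can always be so witnessed is standard but worth an explicit remark. (In the uses this paper makes of the theorem the point is moot, since there $\mu$ is a finite --- indeed a probability --- measure and Step 1 alone suffices.) The remaining ingredients --- the symmetric-difference containments and the continuity-of-measure estimates --- are routine.
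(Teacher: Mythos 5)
The paper never proves this statement---it is imported verbatim from Billingsley (his Theorem 11.4)---so your argument can only be judged on its own terms. Step 1, the good-sets argument in the finite case, is correct: closure under complements is free, closure under countable unions follows from the $2^{-n}$ bookkeeping plus continuity from below, and that is in fact all this paper ever needs, since the theorem is applied only to the probability measure $\mu_{\mathcal{C}}$.

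The genuine gap is in Step 2, exactly at the proviso you flag and then wave off as ``standard but worth an explicit remark.'' Under the hypothesis as stated---$\mu$ $\sigma$-finite on $\sigma(\mathcal{A})$---it is \emph{not} always possible to witness $\sigma$-finiteness by sets of $\mathcal{A}$, and in fact the statement itself then fails, so no repair of Step 2 is available without strengthening the hypothesis. Concretely: let $\Omega=\mathbb{Q}$, let $\mathcal{A}$ be the algebra of finite unions of sets $(a,b]\cap\mathbb{Q}$ (with $-\infty\leq a<b\leq\infty$), and let $\mu$ be counting measure on $\sigma(\mathcal{A})=2^{\mathbb{Q}}$. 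Then $\mu$ is $\sigma$-finite on $\sigma(\mathcal{A})$ (the singletons exhaust $\mathbb{Q}$), and $B=\{0\}$ satisfies $\mu(B)=1<\infty$, yet every nonempty $A\in\mathcal{A}$ is infinite, so $\mu(A\triangle B)\geq1$ for every $A\in\mathcal{A}$. The hypothesis actually used in Billingsley's Theorem 11.4 is that $\mu$ is $\sigma$-finite \emph{on the algebra} $\mathcal{A}$, i.e.\ $\Omega$ is a countable union of $\mathcal{A}$-sets of finite $\mu$-measure; with that reading your Step 2 does go through, since you may take $\Omega_{k}$ to be the union of the first $k$ such sets, which is increasing and stays in $\mathcal{A}$ because $\mathcal{A}$ is an algebra, making $A\cap\Omega_{k}\in\mathcal{A}$ as required. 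So either confine yourself to the finite case (enough for this paper's use) or state and invoke the $\sigma$-finiteness-on-$\mathcal{A}$ hypothesis explicitly; the claim that the needed witness ``always'' exists is false as the theorem is literally worded.
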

We use the following measure-related properties of set differences
$A\triangle B$, which we state without proof:
\begin{property}
\label{prop:diff-measure}For any measure $\mu$ on an algebra $\mathcal{A}$
and any $A,B\in\mathcal{A}$, 
\[
\left|\mu(A)-\mu(B)\right|\leq\mu\left(A\triangle B\right).
\]
\end{property}
{}
\begin{property}
\label{prop:intersect-diff}For any measure $\mu$ on an algebra $\mathcal{A}$
and any $A_{1},A_{2},X_{1},X_{2}\in\mathcal{A}$,
\[
\mu\left(\left(A_{1}\cap X_{1}\right)\triangle\left(A_{2}\cap X_{2}\right)\right)\leq\mu\left(A_{1}\triangle A_{2}\right)+\mu\left(X_{1}\triangle X_{2}\right).
\]
\end{property}

\subsection{Constructing the ``well-defined and well-behaved limiting process''}

To avoid confusion between propositional formulas and measurable sets,
in this section we will generally decorate the names of measurable
sets with a tilde ($\tilde{A}$, $\tilde{B}$, etc.) and leave the
names of propositional formulas undecorated ($A$, $B$, etc.)

The Borel $\sigma$-algebra and Borel measure for the Cantor set $\mathbb{B}^{\omega}$
are constructed as follows:
\begin{defn}
A \emph{cylinder set} is a subset of $\mathbb{B}^{\omega}$ of the
form $\mathrm{cyl}\left(n,C\right)\triangleq C\mathbb{B}^{\omega}$
for some $C\subseteq\mathbb{B}^{n}$. $\mathcal{C}_{0}$ is the collection
of all cylinder sets. This set is an algebra, and the Borel $\sigma$-algebra
for\emph{ $\mathbb{B}^{\omega}$} is $\mathcal{C}\triangleq\sigma\left(\mathcal{C}_{0}\right)$,
the $\sigma$-algebra generated by $\mathcal{C}_{0}$\emph{.}
\end{defn}
Cylinder sets are the basis of a topology on $\mathbb{B}^{\omega}$
in which the open sets are any finite or countable union of cylinder
sets, and this is why we call $\mathcal{C}$ the Borel $\sigma$-agebra
for $\mathbb{B}^{\omega}$.
\begin{defn}
The \emph{Borel measure for $\mathcal{C}$ is the measure} $\mu_{\mathcal{C}}$
such that 
\[
\mu_{\mathcal{C}}\left(\mathrm{cyl}\left(n,C\right)\right)=2^{-n}\left|C\right|
\]
for any $n\geq0$ and $C\subseteq\mathbb{B}^{n}$.
\end{defn}
The definition above is unambiguous because $\mathrm{cyl}(n,C)=\mathrm{cyl}(n+m,C')$
if and only if $C'=C\mathbb{B}^{m}$. Note that $\mu_{\mathcal{C}}$
is trivially $\sigma$-finite, since $\mathbb{B}^{\omega}$ itself
is a cylinder set and $\mu_{\mathcal{C}}\left(\mathbb{B}^{\omega}\right)$
is finite. By Theorem \ref{thm:extend-measure} $\mu_{\mathcal{C}}$
is uniquely defined once we define its value on cylinder sets.

Let us enumerate the elements of $\mathcal{S}$ as $s_{1},s_{2},\ldots$
and identify a sequence $w\in\mathbb{B}^{\omega}$ with the truth
assignment $\rho$ on $\mathcal{S}$ such that $\rho\left(s_{i}\right)=w_{i}$
for all $i$; then every propositional formula corresponds to a cylinder
set:
\begin{defn}
If $A$ is a propositional formula then $[A]$ is the set of $w\in\mathbb{B}^{\omega}$
that satisfy $A$ (considered as truth assignments.)
\end{defn}
Note that $[A]=\mathrm{cyl}(n,C)$ and $\mu_{\mathcal{C}}\left([A]\right)=2^{-n}\#_{S}(A)$,
where $\sigma\left\llbracket A\right\rrbracket \subseteq S=\left\{ s_{1},\ldots,s_{n}\right\} $
and $C$ is the set of $w\in\mathbb{B}^{n}$ that satisfy $A$. Likewise,
every cylinder set corresponds to a propositional formula:
\begin{lem}
\label{lem:cylinders-and-formulas}For any cylinder set $\tilde{A}$
there is a formula $A\in\Phi\left(\mathcal{S}\right)$ such that $\tilde{A}=[A]$.
\end{lem}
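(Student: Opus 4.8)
The plan is to build $A$ in disjunctive normal form, in exactly the way the formulas $Z_i$ were constructed in the proof of Lemma~\ref{lem:sample-space}. Write $\tilde{A}=\mathrm{cyl}(n,C)=C\mathbb{B}^{\omega}$ for some $n\geq 0$ and $C\subseteq\mathbb{B}^{n}$. Recall that we identify $w\in\mathbb{B}^{\omega}$ with the truth assignment $\rho$ on $\mathcal{S}$ given by $\rho(s_i)=w_i$, and that $\rho\left\llbracket\cdot\right\rrbracket$ depends only on the values assigned to the symbols that actually occur in the formula; so a formula on $\{s_1,\ldots,s_n\}$ is satisfied or not by $w$ according only to the prefix $w_1\cdots w_n$.

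For each $c=(c_1,\ldots,c_n)\in C$ define the complete conjunction $Z_c=L_1\wedge\cdots\wedge L_n$ on $\{s_1,\ldots,s_n\}$, where $L_j=s_j$ if $c_j=1$ and $L_j=\neg s_j$ if $c_j=0$, and set $A=\bigvee_{c\in C}Z_c$, invoking the convention from the logical preliminaries that an empty disjunction is an unsatisfiable formula (e.g.\ $s_1\wedge\neg s_1$) when $C=\emptyset$. Since $\sigma\left\llbracket A\right\rrbracket\subseteq\{s_1,\ldots,s_n\}$ we have $A\in\Phi(\mathcal{S})$. To check $[A]=\tilde A$: a sequence $w$ satisfies $Z_c$ iff $w_j=c_j$ for all $1\leq j\leq n$, i.e.\ iff $w_1\cdots w_n=c$; hence $w$ satisfies $A$ iff $w_1\cdots w_n\in C$, i.e.\ iff $w\in C\mathbb{B}^{\omega}=\tilde A$. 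The degenerate cases are covered by the same computation: when $C=\emptyset$ both sides are empty, and when $n=0$ we have $C\subseteq\{()\}$, so either $C=\{()\}$, $A$ is an empty (hence logically valid) conjunction, and $[A]=\mathbb{B}^{\omega}=\tilde A$, or $C=\emptyset$, $A$ is unsatisfiable, and $[A]=\emptyset=\tilde A$.

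There is no real obstacle here: the statement is just the standard fact that every Boolean function on finitely many variables has a disjunctive normal form, specialized to reading off the membership predicate of a cylinder set as a propositional formula. The only points needing a little care are the empty-disjunction and $n=0$ conventions, and the bookkeeping identifying length-$n$ prefixes in $\mathbb{B}^{n}$ with restrictions of truth assignments to $\{s_1,\ldots,s_n\}$, both already handled above.
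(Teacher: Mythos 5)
Your proof is correct and uses essentially the same construction as the paper: the disjunction over $c\in C$ of complete conjunctions of literals on $s_1,\ldots,s_n$, with $[A]=\tilde{A}$ verified by matching length-$n$ prefixes. The only difference is that you spell out the degenerate cases ($C=\emptyset$, $n=0$) via the empty-disjunction convention, which the paper leaves to the reader.
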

\begin{proof}
Let $\tilde{A}=\mathrm{cyl}\left(n,C\right)$ and define the propositional
formula $A$ as 
\begin{align*}
A & =\bigvee_{c\in C}A_{c}\\
A_{c} & =\bigwedge_{i=1}^{n}L_{i,c_{i}}\\
L_{i,0} & =\neg s_{i}\\
L_{i,1} & =s_{i}
\end{align*}
It is straightforward to see that $\left[A\right]=\tilde{A}$.
\end{proof}
We can define an analog to $P(A\mid X)$, but for measurable sets:
\begin{defn}
Let $\mathcal{A}$ be a $\sigma$-algebra and $\mu$ a measure on
$\mathcal{A}$. For any $\tilde{A},\tilde{X}\in\mathcal{A}$ with
$\mu\left(\tilde{X}\right)>0$, define

\[
\Pr\left(\tilde{A};\tilde{X},\mu\right)=\frac{\mu\left(\tilde{A}\cap\tilde{X}\right)}{\mu\left(\tilde{X}\right)}.
\]

\end{defn}
We then find that 
\[
P\left(A\mid X\right)=\frac{2^{-n}\#_{S}\left(A\wedge X\right)}{2^{-n}\#_{S}\left(X\right)}=\Pr\left([A];[X],\mu_{\mathcal{C}}\right)
\]
where we choose $n$ to be large enough that $\sigma\left\llbracket A,X\right\rrbracket \subseteq S=\left\{ s_{1},\ldots,s_{n}\right\} $.
We use this fact to show that Jaynes's ``well-defined and well-behaved
limiting process'' is guaranteed to exist for measurable sets:
\begin{thm}
\label{thm:well-defined-limit}Let $\tilde{A},\tilde{X}\in\mathcal{C}$,
with $\mu_{\mathcal{C}}\left(\tilde{X}\right)>0$. Then there exists
a sequence of formulas $A_{i}\in\Phi\left(\mathcal{S}\right)$ and
$X_{i}\in\Phi^{+}\left(\mathcal{S}\right)$ such that 
\begin{enumerate}
\item $\lim_{i\rightarrow\infty}\mu_{\mathcal{C}}\left(\left[A_{i}\right]\triangle\tilde{A}\right)=0$.
\item $\lim_{i\rightarrow\infty}\mu_{\mathcal{C}}\left(\left[X_{i}\right]\triangle\tilde{X}\right)=0$.
\item $\lim_{i\rightarrow\infty}P\left(A_{i}\mid X_{i}\right)=\Pr\left(\tilde{A};\tilde{X},\mu_{\mathcal{C}}\right)$.
\end{enumerate}
\end{thm}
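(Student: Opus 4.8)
The plan is to approximate $\tilde A$ and $\tilde X$ in symmetric difference by cylinder sets and then convert those cylinder sets into propositional formulas via Lemma \ref{lem:cylinders-and-formulas}. First, for each $i$ I would set $\delta_i = \min\!\left(1/i,\ \tfrac12\mu_{\mathcal C}(\tilde X)\right)$ and invoke Theorem \ref{thm:approximate-measurable-set} --- which applies since $\mu_{\mathcal C}$ is a finite (hence $\sigma$-finite) measure on $\mathcal C = \sigma(\mathcal C_0)$, so every measurable set has finite measure --- to obtain cylinder sets $\tilde A_i,\tilde X_i\in\mathcal C_0$ with $\mu_{\mathcal C}(\tilde A_i\triangle\tilde A)<1/i$ and $\mu_{\mathcal C}(\tilde X_i\triangle\tilde X)<\delta_i$. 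By Property \ref{prop:diff-measure}, $\mu_{\mathcal C}(\tilde X_i)\ge\mu_{\mathcal C}(\tilde X)-\delta_i\ge\tfrac12\mu_{\mathcal C}(\tilde X)>0$, so $\tilde X_i$ is a nonempty cylinder set. Lemma \ref{lem:cylinders-and-formulas} then supplies formulas $A_i,X_i\in\Phi(\mathcal S)$ with $[A_i]=\tilde A_i$ and $[X_i]=\tilde X_i$; since $[X_i]\ne\emptyset$ the formula $X_i$ is satisfiable, i.e.\ $X_i\in\Phi^+(\mathcal S)$, as the statement requires.

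Conclusions 1 and 2 are then immediate, since $\mu_{\mathcal C}([A_i]\triangle\tilde A)=\mu_{\mathcal C}(\tilde A_i\triangle\tilde A)<1/i\to0$ and likewise $\mu_{\mathcal C}([X_i]\triangle\tilde X)<\delta_i\le1/i\to0$. For conclusion 3, recall from the discussion preceding the theorem that $P(A_i\mid X_i)=\Pr([A_i];[X_i],\mu_{\mathcal C})=\mu_{\mathcal C}([A_i]\cap[X_i])/\mu_{\mathcal C}([X_i])$. I would handle numerator and denominator separately: Property \ref{prop:diff-measure} gives $\mu_{\mathcal C}([X_i])\to\mu_{\mathcal C}(\tilde X)$, and combining Properties \ref{prop:diff-measure} and \ref{prop:intersect-diff} gives $\bigl|\mu_{\mathcal C}([A_i]\cap[X_i])-\mu_{\mathcal C}(\tilde A\cap\tilde X)\bigr|\le\mu_{\mathcal C}([A_i]\triangle\tilde A)+\mu_{\mathcal C}([X_i]\triangle\tilde X)\to0$. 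Since the denominators stay bounded below by $\tfrac12\mu_{\mathcal C}(\tilde X)>0$ and converge to the nonzero limit $\mu_{\mathcal C}(\tilde X)$, the ratios converge to $\mu_{\mathcal C}(\tilde A\cap\tilde X)/\mu_{\mathcal C}(\tilde X)=\Pr(\tilde A;\tilde X,\mu_{\mathcal C})$, which is conclusion 3.

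There is really only one point that needs care, and it is minor: the approximating premise must be satisfiable, equivalently $[X_i]$ must have positive measure, which is exactly why the tolerance $\delta_i$ for approximating $\tilde X$ must be capped at $\tfrac12\mu_{\mathcal C}(\tilde X)$ rather than left equal to $1/i$; with that cap in place, everything else is a routine $\varepsilon$-estimate assembled from the stated set-difference inequalities. (One could equally well simply discard the finitely many initial indices $i$ for which $1/i\ge\tfrac12\mu_{\mathcal C}(\tilde X)$, but capping $\delta_i$ is cleaner.) I would also note in passing that Theorem \ref{thm:extend-measure} is what makes $\mu_{\mathcal C}$ well-defined on all of $\mathcal C$, so that the statement is meaningful in the first place.
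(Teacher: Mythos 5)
Your proposal is correct and follows essentially the same route as the paper's own proof: approximate $\tilde{A}$ and $\tilde{X}$ by cylinder sets via Theorem \ref{thm:approximate-measurable-set}, convert them to formulas via Lemma \ref{lem:cylinders-and-formulas}, force satisfiability of $X_{i}$ by keeping the approximation error below $\mu_{\mathcal{C}}\left(\tilde{X}\right)$, and pass to the limit of the ratio using Properties \ref{prop:diff-measure} and \ref{prop:intersect-diff}. Your cap $\delta_{i}=\min\left(1/i,\tfrac{1}{2}\mu_{\mathcal{C}}\left(\tilde{X}\right)\right)$ is just a cosmetic variant of the paper's requirement that $\epsilon_{1}<\mu_{\mathcal{C}}\left(\tilde{X}\right)$.
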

\begin{proof}
Let $\epsilon_{i}$, $i\geq1$ be any decreasing sequence of positive
numbers whose limit is 0, with $\epsilon_{1}<\mu_{\mathcal{C}}\left(\tilde{X}\right)$.
Using Theorem \ref{thm:approximate-measurable-set} and Lemma \ref{lem:cylinders-and-formulas}
we can define 
\begin{align*}
A_{i} & =\mbox{some }A\in\Phi\left(\mathcal{S}\right)\mbox{ such that }\mu_{\mathcal{C}}\left([A]\triangle\tilde{A}\right)<\epsilon_{i}\\
X_{i} & =\mbox{some }X\in\Phi\left(\mathcal{S}\right)\mbox{ such that }\mu_{\mathcal{C}}\left([X]\triangle\tilde{X}\right)<\epsilon_{i}
\end{align*}
1 and 2 in the theorem statement follow directly from these definitions.
From Property \ref{prop:diff-measure} we have 
\[
\left|\mu_{\mathcal{C}}\left(\left[X_{i}\right]\right)-\mu_{\mathcal{C}}\left(\tilde{X}\right)\right|\leq\mu_{\mathcal{C}}\left([X]\triangle\tilde{X}\right)<\epsilon_{i}<\mu_{\mathcal{C}}\left(\tilde{X}\right)
\]
and so $\mu_{\mathcal{C}}\left(\left[X_{i}\right]\right)>0$, i.e.,
$X_{i}$ is satisfiable. Property \ref{prop:diff-measure} also gives
us 
\[
\lim_{i\rightarrow\infty}\mu_{\mathcal{C}}\left(\left[X_{i}\right]\right)=\mu_{\mathcal{C}}\left(\tilde{X}\right).
\]
Property \ref{prop:intersect-diff} gives us 
\[
\mu_{\mathcal{C}}\left(\left[A_{i}\wedge X_{i}\right]\triangle\left(\tilde{A}\cap\tilde{X}\right)\right)<2\epsilon_{i}
\]
and then Property \ref{prop:diff-measure} yields 
\[
\lim_{i\rightarrow\infty}\mu_{\mathcal{C}}\left(\left[A_{i}\wedge X_{i}\right]\right)=\mu_{\mathcal{C}}\left(\tilde{A}\cap\tilde{X}\right).
\]
Finally we have 
\[
\lim_{i\rightarrow\infty}P\left(A_{i}\mid X_{i}\right)=\lim_{i\rightarrow\infty}\frac{\mu_{\mathcal{C}}\left(\left[A_{i}\wedge X_{i}\right]\right)}{\mu_{\mathcal{C}}\left(\left[X_{i}\right]\right)}=\frac{\mu_{\mathcal{C}}\left(\tilde{A}\cap\tilde{X}\right)}{\mu_{\mathcal{C}}\left(\tilde{X}\right)}=\Pr\left(\tilde{A};\tilde{X},\mu_{\mathcal{C}}\right).
\]
\end{proof}

\end{document}